\documentclass[11pt]{article}

\usepackage[letterpaper,margin=1in]{geometry}
\usepackage[utf8]{inputenc}
\usepackage[T1]{fontenc}
\usepackage{natbib}
\usepackage[hidelinks]{hyperref}
\usepackage{url}
\usepackage{booktabs}
\usepackage{amsmath}
\usepackage{amssymb}
\usepackage{amsfonts}
\usepackage{nicefrac}
\usepackage{microtype}
\usepackage[dvipsnames]{xcolor}
\usepackage{algorithm}
\usepackage{algorithmic}
\usepackage{amsthm}
\usepackage{aliascnt}
\usepackage[capitalize,noabbrev]{cleveref}

\theoremstyle{plain}
\newtheorem{theorem}{Theorem}[section]
\newaliascnt{proposition}{theorem}
\newtheorem{proposition}[proposition]{Proposition}
\aliascntresetthe{proposition}
\crefname{proposition}{Proposition}{Propositions}
\newaliascnt{lemma}{theorem}

\aliascntresetthe{lemma}
\crefname{lemma}{Lemma}{Lemmas}
\newaliascnt{corollary}{theorem}
\newtheorem{corollary}[corollary]{Corollary}
\aliascntresetthe{corollary}
\crefname{corollary}{Corollary}{Corollaries}

\newcommand{\R}{\mathbb{R}}
\newcommand{\Z}{\mathbb{Z}}
\newcommand{\E}{\mathbb{E}}
\DeclareMathOperator{\vecop}{vec}
\DeclareMathOperator{\tr}{tr}
\newcommand{\cL}{\mathcal{L}}
\newcommand{\cR}{\mathcal{R}}
\newcommand{\Oh}{\mathcal{O}}

\newcommand{\stderr}[1]{{\scriptsize\textcolor{gray}{$\pm$#1}}}

\title{BaKron: Efficient Quantization with Kronecker-Factored Hessians}

\author{
  Johann Birnick \\
  Department of Mathematics \\
  University of California San Diego \\
  La Jolla, California, United States \\
  \texttt{jbirnick@ucsd.edu}
  \and
  Rayan Saab \\
  Department of Mathematics and HDSI \\
  University of California San Diego \\
  La Jolla, California, United States \\
  \texttt{rsaab@ucsd.edu}
}

\date{}

\begin{document}

\maketitle

\begin{abstract}
We accelerate a family of algorithms for neural network quantization whose geometry is informed by any Kronecker-factored approximation of the Hessian. GPTQ-style adaptive rounding typically uses one-sided information derived from input activations. Two-sided Kronecker-factored Hessian approximations can additionally capture correlations across output coordinates, but applying GPTQ directly in the vectorized weight domain is computationally expensive. 

Building on the two-sided adaptive-rounding formulation used by BoA and YAQA, we introduce BaKron, an efficient solver that combines anti-diagonal parallelism with a recursive divide-and-conquer construction. For an $m\times n$ weight matrix, BaKron uses $\Oh(m+n)$ sequential steps while reducing the total work from $\Oh(m^2n^2)$ to $\Oh(mn(m+n))$. Thus, it matches the cubic scaling of GPTQ while exploiting richer curvature information. Moreover, BaKron is modular with respect to both the base quantizer and the Hessian estimator. 
We also provide practical benchmarks, consider a range of Hessians that BaKron can be called with, find an efficient technique to compute these Hessians, and evaluate the algorithm experimentally.
\end{abstract}

\section{Introduction}
\label{sec:introduction}

Consider the problem of \emph{quantizing} the weights of a neural network post training.
Here, we are given a trained neural network, and for each linear unit $x \mapsto W x$, represented by a weight matrix $W \in \mathbb{R}^{m \times n}$, we wish to replace $W$ by another weight matrix $V$ of low numerical precision, while maintaining overall accuracy of the network.

Each linear unit $W$ is quantized separately.
However, simply rounding each entry of $W$ to the closest element of the quantization alphabet is suboptimal.
Instead, one has to take the geometry of the network into account.

The well-known GPTQ/OPTQ algorithm by \citet{frantar2022gptq} uses \emph{input correlations} to inform the quantization algorithm.
Concretely, the algorithm aims to find a $V$ that approximately minimizes $\lVert (W - V) X \rVert_F^2$.
Here $X$ represents a calibration dataset, so that each column $x \in \mathbb{R}^n$ is a sample input of the linear unit $W$.
Hence the Gram matrix $X X^T$ is used to inform the geometry of the GPTQ algorithm, since when $V$ is vectorized into $\vecop(V) \in \R^{m n}$, the Hessian of the above quadratic optimization problem (up to a scalar multiple) is exactly $X X^T \otimes I_m$.

One interpretation of the identity matrix $I_m$ that appears in this Hessian is that GPTQ treats all \emph{output features} equally and does not consider correlations between them.
We consider a natural extension of the GPTQ algorithm to the more general setting of an arbitrary Kronecker-factored Hessian of the form $A \otimes B$, where $A \in \R^{n \times n}$ and $B \in \R^{m \times m}$ are arbitrary positive definite matrices.
In particular, unlike GPTQ, it does not require $B$ to be the identity matrix.

This natural extension has been used in a functionally equivalent form in the previous papers by \citet{kim2024boa} (BoA) and \citet{tseng2025model} (YAQA), however, the concrete algorithmic implementation in these previous works is computationally suboptimal.
\citet{kim2024boa} run a straightforward implementation which requires $\Oh(mn)$ sequential steps and $\Oh(m^2 n^2)$ total work.
\citet{tseng2025model} improve on that by improving on parallelization, requiring only $\Oh(m+n)$ sequential steps, but still requiring $\Oh(m^2 n^2)$ total work.

\textbf{Contributions.}
In this paper, we add an additional algorithmic ingredient, namely a recursive divide-and-conquer approach, to accelerate the algorithm and bring the total work down to just $\Oh(m n (m+n))$, while still requiring only $\Oh(m+n)$ sequential steps.
This matches the cubic complexity of GPTQ, which also needs cubic $\Oh(m n^2)$ total cost and linear $\Oh(n)$ sequential steps, while allowing the quantization algorithm to capture richer geometric information than GPTQ.
We call the resulting algorithm \textbf{Ba}bai Quantization for \textbf{Kron}ecker-factored Hessians, or \textbf{BaKron}.
See \cref{tab:complexity} for an overview of the complexity of these algorithms.

We also implement the algorithm and demonstrate the practical speed-up with empirical benchmarks.

Additionally, we consider a range of Kronecker-factored Hessians $A \otimes B$ that can be used for quantizing neural networks, in particular transformer-based architectures, with BaKron.
These Hessians are partially inspired by previous work, and here we focus on two subquestions: Which loss to consider for $W$, and how to Kronecker-factor (or approximate) the resulting Hessian.
For the loss we consider a rich global loss and a cheap local loss.
For the Kronecker-factoring procedure, we also propose two approaches: A \emph{K-FAC-style} procedure which is based on an independence assumption, and a \emph{Shampoo-style} procedure which is based on a power iteration scheme to find the best Kronecker-factorization of a matrix.
In total we consider four possible Hessians to be used with BaKron.

We also provide an algorithmic technique for computing the rich global Hessians efficiently.
This improves over the way YAQA computes the Hessians, significantly decreasing memory requirements.

Lastly, we experimentally evaluate BaKron with the different Hessians. 

The rest of the paper is organized as follows.
In \cref{sec:relatedwork} we discuss related work.
In \cref{sec:algorithm}, we derive and explain the BaKron algorithm in detail.
Then we also analyze its computational complexity and provide theoretical guarantees.
In \cref{sec:hessians} we explore different Kronecker-factored approximations to the Hessian that might be used with BaKron.
\cref{sec:efficientbackprop} explains how to compute backpropagated Hessians efficiently.
Finally, \cref{sec:benchmarks} features benchmarks, and \cref{app:experiments} contains end-to-end experiments.

\section{Related Work}
\label{sec:relatedwork}

Neural network quantization is a very active area of research, with numerous algorithms proposed in recent years.
Some methods focus on making the entries of $W$ ``well-behaved'' and therefore easier to quantize, for example by applying a random or learned rotation to $W$ \citep{chee2023quip, ashkboos2024quarot, liu2024spinquant}.
Other methods treat quantization as an (integer) optimization problem: After choosing a proxy for the distortion caused to the network when replacing $W$ by $V$, one tries to find a quantized matrix $V$ that minimizes this distortion.

A common proxy is to consider a \emph{quadratic} optimization problem in the variable $\vecop(V)$, which is characterized by its associated Hessian.
This Hessian would ideally be chosen as $H_W = \nabla^2_{\vecop(W)} \cL$, where $\cL$ is the loss function used to train the network.
However, this matrix is impractical to store due to its $m^2 n^2$ many entries, an issue that is often circumvented by utilizing approximations such as Kronecker factorizations.
Many algorithms, most notably GPTQ \citep{frantar2022gptq}, use a ``one-sided'' approximation of the Hessian on the input side, $H_W \approx \E[x x^T] \otimes I$.

\textbf{BoA and YAQA.}
There are existing works, such as \citep{kim2024boa, tseng2025model, lamaakal2025bayesq}, which also use a two-sided approximation of the Hessian, $H_W \approx A \otimes B$.
The BoA algorithm by \citet{kim2024boa} and the YAQA algorithm by \citet{tseng2025model} are closely related to our work, as those works also utilize GPTQ in the vectorized weight space to accommodate a Kronecker-factored Hessian approximation.
Thus, their algorithms are functionally equivalent to the algorithm presented here, but crucially  are less computationally efficient, hence slower than BaKron.
Our method thus \emph{accelerates} BoA and YAQA.
In \cref{sec:algorithm} we derive BaKron step by step and explain the relation to BoA and YAQA in more detail.

\textbf{Concurrent work.}
Recently, \citet{chen2026gptq2dcubictimetwosidedadaptive} have posted similar results on arXiv, considering the same problem and obtaining an algorithm with the same cubic complexity as BaKron.
Our work is fully independent, and the cubic-time algorithm presented here predates that preprint.\footnote{A version of this work has been submitted for publication. The cubic-time algorithm presented here was documented, with a verifiable timestamp, in the written record of the review process before the above preprint appeared on arXiv.}
We also note that they use a different algorithmic technique to get the complexity down to $\Oh(mn(m+n))$, and that our paper additionally features benchmarks, Hessians, an efficient algorithm to compute the Hessians, and experiments.

\section{The BaKron Algorithm}
\label{sec:algorithm}

\subsection{Notation and Conventions}
\label{sec:notation}

For simplicity of exposition, we model the quantized matrix $V \in \mathbb{Z}^{m \times n}$ as an integer matrix.
In practice, with only $b+1$ bits available, the quantization alphabet may be $\{-2^b, ..., 2^b-1\}$ instead of $\mathbb{Z}$, and one finds scaling factors $s \in \mathbb{R}$ for each row and/or column of $W$ to map the alphabet to an appropriate range.
This, however, does not change the algorithms. For all algorithms described in this paper, one may simply replace ``$\mathrm{round}$'' by a ``$\mathrm{quantize}$'' function that maps a scalar to the closest element of the quantization alphabet.

When $A$ is a matrix, we use $A_j$ to denote the $j$\textsuperscript{th} \emph{column} of $A$, and we denote submatrices like $A_{i:i', j:j'}$.
The upper endpoint of such a range is exclusive, so that consecutive ranges $i:i'$ and $i':i''$ do not overlap.
An omitted endpoint denotes the first or the last index, so that for example $A_{:,j} = A_j$.
We use $A \otimes B$ to denote the Kronecker product of matrices $A$ and $B$, and we recall that it satisfies $(A \otimes B) (A' \otimes B') = (A A') \otimes (B B')$, assuming the dimensions match.
In particular, matrix factorizations like the Cholesky decomposition or the QR-decomposition factorize/distribute over the Kronecker product.

If $A$ is a positive definite matrix, then $\mathrm{Cholesky}(A)$ denotes the unique lower triangular matrix $L$ that satisfies $L L^T = A$ and has positive diagonal entries, and similarly $\mathrm{RevCholesky}(A)$ denotes the unique lower triangular matrix $L$ that satisfies $L^T L = A$ with positive diagonal entries.

For an $m \times n$ matrix $W$ we define $\vecop(W)$ to be the column-major flattening of $W$ into a vector of length $m \cdot n$.
This defines an isomorphism of vector spaces $\vecop: \mathbb{R}^{m \times n} \to \mathbb{R}^{m n}$ which satisfies $\lVert \vecop(W) \rVert_2 = \lVert W \rVert_F$. We also have the following well known relation.

\begin{proposition}
\label{prop:iso}
$(A \otimes B) \cdot \vecop(W) = \vecop(B W A^T)$
\end{proposition}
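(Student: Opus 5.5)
I will prove the identity by comparing both sides entrywise, using only the definition of the Kronecker product and the column-major convention for $\vecop$ from \cref{sec:notation}. Let $A \in \R^{n \times n}$, $B \in \R^{m \times m}$ and $W \in \R^{m \times n}$; the argument works verbatim for rectangular $A \in \R^{p \times n}$ and $B \in \R^{q \times m}$. Because $\vecop$ is column-major, $\vecop(W)$ is the concatenation of the columns $W_1, \dots, W_n$. Its entry at block $j$, position $i$, is therefore $W_{ij}$. In the same way, $\vecop(B W A^T)$ is the concatenation of the columns $(B W A^T)_k$.

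The first step is a blockwise description of the left-hand side. By definition, $A \otimes B$ is the block matrix whose $(k,j)$ block is $A_{kj} B$. Multiplying it by the stacked vector $\vecop(W)$ shows that block $k$ of the product equals
\[
\sum_{j} A_{kj} B W_j = B \Bigl(\sum_j A_{kj} W_j\Bigr).
\]

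The second step is to identify the bracketed sum. We have $\sum_j A_{kj} W_j = \sum_j W_j (A^T)_{jk} = (W A^T)_k$, the $k$-th column of $W A^T$. Left-multiplying by $B$ gives $B (W A^T)_k = (B W A^T)_k$, since left multiplication acts columnwise. Stacking these columns over $k$ gives exactly $\vecop(B W A^T)$ under the column-major convention, which proves the claim.

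There is no real obstacle. The only point needing care is consistency of conventions. The column-major flattening is what makes the block index correspond to the column index of $W$. With a row-major $\vecop$, the identity would instead read $(A \otimes B)\vecop(W) = \vecop(A W B^T)$ for $W \in \R^{n\times m}$. I will therefore state explicitly which index of $\vecop(W)$ is the block index before computing.
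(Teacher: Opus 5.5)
Your proof is correct: block $k$ of $(A \otimes B)\vecop(W)$ is $\sum_j A_{kj} B W_j = B (W A^T)_k = (B W A^T)_k$, and stacking over $k$ under the column-major convention gives $\vecop(B W A^T)$, also for rectangular $A$ and $B$. The paper gives no proof of its own and quotes the identity as well known, so your direct blockwise verification is the standard argument it relies on; your aside about the row-major variant is also right, provided the row-major flattening is used on both sides.
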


When $A$ is a square matrix, we define $\mathrm{diag}(A)$ to be the diagonal \emph{square matrix of the same size} (not a vector), which has the same diagonal as $A$ and zeros everywhere else.
When $W$ is an $m \times n$ matrix, we define its $k$\textsuperscript{th} anti-diagonal as the collection of all entries $W_{i,j}$ with $i+j = k$.
Thus, $W$ has $m+n-1$ anti-diagonals, with indices $k=2,...,m+n$.
We further define $\mathrm{antidiag}_k(W)$ to be the \emph{$m \times n$ matrix} which keeps the $k$\textsuperscript{th} anti-diagonal of $W$ and has all other entries set to $0$.
Similarly we use $\mathrm{antidiag}_{k:k'}(W)$ to denote a matrix that contains a range of anti-diagonals.

\subsection{BaKron-naive: GPTQ in the Vectorized Weight Domain}
\label{sec:gptq_flattened}

We recall the GPTQ algorithm, which is described in \cref{alg:gptqsingle} for the case of quantizing a single neuron $w$, corresponding to a single row $W_{i,:}$ of $W$.
In practice, this is executed in parallel for all rows of $W$.

\begin{algorithm}[H]
  \caption{GPTQ on a single neuron}
  \label{alg:gptqsingle}
  \begin{algorithmic}
    \STATE {\bfseries Input:} $w \in \mathbb{R}^{n}$, $H \in \mathbb{R}^{n \times n}$
    \STATE $L \gets \mathrm{Cholesky}(H^{-1})$
    \STATE Normalize columns of $L$ by the diagonal entries; i.e., replace $L_i \gets L_i / L_{i,i}$ for all $i = 1,...,n$.
    \FOR{$i=1$ {\bfseries to} $n$}
    \STATE $v \gets \mathrm{round}(w_i)$.
    \STATE $\Delta \gets v - w_i$
    \STATE $w \gets w + \Delta \cdot L_i$
    \ENDFOR
    \STATE {\bfseries Output:} $w$ (which satisfies $w \in \mathbb{Z}^n$)
  \end{algorithmic}
\end{algorithm}

The first insight, as already observed by \citet{kim2024boa}, is that for dealing with a Kronecker-factored approximation to the Hessian $H_W \approx A \otimes B$, we can  apply \cref{alg:gptqsingle} to $w = \vecop(W)$ and $H = A \otimes B$.
Via \cref{prop:iso} we can describe all the operations of \cref{alg:gptqsingle} in the unvectorized weight space, and we obtain \cref{alg:bakronnaive}.

\begin{algorithm}[H]
  \caption{BaKron-naive (GPTQ in vectorized weight domain)}
  \label{alg:bakronnaive}
  \begin{algorithmic}
    \STATE {\bfseries Input:} $W \in \mathbb{R}^{m \times n}$, $A \in \mathbb{R}^{n \times n}, B \in \mathbb{R}^{m \times m}$
    \STATE $L^{(A)}, L^{(B)} \gets \mathrm{Cholesky}(A^{-1}), \mathrm{Cholesky}(B^{-1})$
    \STATE Normalize columns of $L^{(A)}, L^{(B)}$ by the diagonal entries.
    \FOR{$i=1$ {\bfseries to} $m$}
    \FOR{$j=1$ {\bfseries to} $n$} 
    \STATE $v \gets \mathrm{round}(W_{i,j})$.
    \STATE $\Delta \gets v - W_{i,j}$
    \STATE $W \gets W + L^{(B)}_i \cdot \Delta \cdot (L^{(A)}_j)^T$
    \ENDFOR
    \ENDFOR
    \STATE {\bfseries Output:} $W$ (which satisfies $W \in \mathbb{Z}^{m \times n}$)
  \end{algorithmic}
\end{algorithm}

We note that the idea of running GPTQ in the vectorized weight domain is not arbitrary, but can be motivated from different perspectives.
Recent work \citep{birnick2025lattice, chen2025geometry} has shown that finding a $v \in \mathbb{Z}^n$ which approximates $w$ in the sense of a quadratic loss with Hessian $H$ corresponds to solving a \emph{closest vector problem} for the \emph{lattice} with basis $M$, where $M$ is any matrix satisfying $M^T M = H$.
They also showed that GPTQ is equivalent to Babai's nearest plane algorithm, which is a classic algorithm to solve this task.

Our goal is precisely to find $\vecop(V) \in \Z^{m n}$ which approximates $\vecop(W)$ in the sense of a quadratic loss with Hessian $H = A \otimes B$.
Thus it is natural to use Babai's nearest plane algorithm, or GPTQ, for this task, and this is exactly what \cref{alg:bakronnaive} does.

One can also motivate the algorithm in a similar way to how GPTQ, or rather its predecessor \emph{Optimal Brain Surgeon} \citep{hassibi1993optimal}, was motivated:
First, $V_{1,1}$ is simply chosen as $\mathrm{round}(W_{1,1})$.
Then $W$ is updated by minimizing the quadratic loss with respect to the Hessian $H = A \otimes B$, under the constraint that $W_{1,1} = V_{1,1}$.
Then the algorithm proceeds in the same fashion with the other entries, always updating ``future'' entries to compensate for the error introduced by rounding the current entry.

A big issue with \cref{alg:bakronnaive} is that it requires $m \cdot n$ sequential steps.
Therefore it can only be employed for small matrices.
For example, \citet{kim2024boa} use the algorithm in this form, and they overcome the computational burden by only quantizing matrices from the attention block and taking a separate Hessian approximation for each attention head, so that they can process multiple attention heads in parallel while each attention head only yields small matrices.

\subsection{BaKron-antidiagonal: Batch Processing of Anti-Diagonals for $m+n$ Sequential Steps}

To bring the number of sequential steps down, one has to process some of these steps in parallel.
This can be done by analyzing the dependency graph among the entries, as first shown by \citet{tseng2025model}.

The main observation is that the update caused by an entry $W_{i,j}$ only updates entries to the bottom-right of itself, i.e., it only updates entries $W_{i',j'}$ which satisfy $i'\geq i$ and $j' \geq j$.
Conversely, this means that $W_{i',j'}$ will only ever be changed by updates caused by entries to the top-left of itself.
Thus one can round $W_{i,j}$ and perform the corresponding update to $W$ as soon as all entries above the anti-diagonal $i+j$ have been processed.
As shown in \cref{thm:equiv}, this allows one to go from anti-diagonal to anti-diagonal, and process each one in parallel, via:

\begin{algorithmic}
  \FOR{$k=2$ {\bfseries to} $m+n$}
  \FOR{$(i,j)$ in anti-diagonal $k$ {\bfseries in parallel}}
  \STATE $v \gets \mathrm{round}(W_{i,j})$.
  \STATE $\Delta \gets v - W_{i,j}$
  \STATE $W \gets W + L^{(B)}_i \cdot \Delta \cdot (L^{(A)}_j)^T$
  \ENDFOR
  \ENDFOR
\end{algorithmic}
Now the inner loop adds outer products of vectors to $W$.
Since a sum of outer products can be described by a matrix multiplication, one can batch all updates in the inner loop together using a single matrix multiplication.
This yields the BaKron-antidiagonal algorithm, described in \cref{alg:bakronantidiagonal}.

\begin{algorithm}[H]
  \caption{BaKron-antidiagonal (BaKron-naive with parallel processing of anti-diagonals)}
  \label{alg:bakronantidiagonal}
  \begin{algorithmic}
    \STATE {\bfseries Input:} $W \in \mathbb{R}^{m \times n}$, $A \in \mathbb{R}^{n \times n}, B \in \mathbb{R}^{m \times m}$
    \STATE $L^{(A)}, L^{(B)} \gets \mathrm{Cholesky}(A^{-1}), \mathrm{Cholesky}(B^{-1})$
    \STATE Normalize columns of $L^{(A)}, L^{(B)}$ by the diagonal entries.
    \FOR{$k=2$ {\bfseries to} $m+n$}
    \STATE $D \gets \mathrm{antidiag}_k(W) \in \mathbb{R}^{m \times n}$
    \STATE $V \gets \mathrm{round}(D)$.
    \STATE $\Delta \gets V - D$
    \STATE $W \gets W + L^{(B)} \cdot \Delta \cdot (L^{(A)})^T$
    \ENDFOR
    \STATE {\bfseries Output:} $W$ (which satisfies $W \in \mathbb{Z}^{m \times n}$)
  \end{algorithmic}
\end{algorithm}

The iterative version of the YAQA algorithm described by \citet[Appendix~6]{tseng2025model} is essentially equivalent to \cref{alg:bakronantidiagonal}, although they use a different base quantizer.
While it requires only $\Oh(m+n)$ sequential steps, note that it still requires $\Oh(m^2 n^2)$ total work, just like the naive \cref{alg:bakronnaive}. This is because each step costs $\Oh(\min(m,n)^2 (m+n))$ due to the two matrix multiplications.

\subsection{BaKron-recursive: Divide-and-Conquer Approach for $\Oh(m n (m+n))$ Total Work}

We would like  to further reduce the total cost of the algorithm.
Initially, and to illustrate the idea, in this section we present an algorithm that starts again with BaKron-naive, i.e. not using the parallel anti-diagonals idea, and brings down \emph{only} the total cost of the algorithm, keeping the suboptimal $\Oh(m \cdot n)$ sequential steps.

The main observation is that the outer product update $W \gets W + L_i^{(B)} \cdot \Delta  \cdot (L_j^{(A)})^T$ costs $\Oh(m \cdot n)$, but most of the matrix entries that are updated will only be accessed much later (namely the values in the bottom right region of the matrix).
One would like to hold these updates back, so that later one can perform a batched update of the form $W \gets W + L^{(B)} \cdot \Delta \cdot (L^{(A)})^T$ with $\Delta$ being a \emph{dense matrix}.

A similar idea was already used by \citet{frantar2022gptq} in the original GPTQ paper, however, they used a fixed block size for the lazy batch updates.
We will use a block size that depends on the remaining problem size, making the algorithm recursive and divide-and-conquer style.
We solve the first half of the problem, propagate the introduced errors to the second half, and then solve the second half of the problem.
For GPTQ on a single neuron, this could be implemented via \cref{alg:gptqrecursivesingle}.

\begin{algorithm}[H]
  \caption{GPTQ-recursive on a single neuron}
  \label{alg:gptqrecursivesingle}
  \begin{algorithmic}
    \STATE {\bfseries Input:} $w \in \mathbb{R}^{n}$, $H \in \mathbb{R}^{n \times n}$
    \STATE $L \gets \mathrm{Cholesky}(H^{-1})$
    \STATE Normalize columns of $L$ by the diagonal entries; i.e., replace $L_i \gets L_i / L_{i,i}$ for all $i = 1,...,n$.
    \IF{$n=1$}
    \RETURN $\mathrm{round}(w)$ \COMMENT{base case}
    \ELSE
    \STATE $v_{1:n/2} \gets \text{GPTQ-recursive}(w_{1:n/2}, L_{1:n/2, 1:n/2})$ \COMMENT{process first half recursively}
    \STATE $w_{n/2:} \gets w_{n/2:} + L_{n/2:,1:n/2} \cdot (v_{1:n/2} - w_{1:n/2})$ \COMMENT{propagate errors to second half}
    \STATE $v_{n/2:} \gets \text{GPTQ-recursive}(w_{n/2:}, L_{n/2:, n/2:})$ \COMMENT{process second half recursively}
    \RETURN $v$
    \ENDIF
  \end{algorithmic}
\end{algorithm}

While for GPTQ this approach does not change the total cost, if we apply it to BaKron-naive in a two-dimensional way as shown in \cref{alg:bakronrecursive}, it \emph{does} bring the total cost down to just $\Oh(m n (m+n))$, which is essentially the same as the total cubic cost of GPTQ, see \cref{tab:complexity}.

\begin{algorithm}[H]
  \caption{BaKron-recursive (BaKron-naive with recursive divide-and-conquer approach)}
  \label{alg:bakronrecursive}
  \begin{algorithmic}
    \STATE {\bfseries Input:} $W \in \mathbb{R}^{m \times n}$, $A \in \mathbb{R}^{n \times n}, B \in \mathbb{R}^{m \times m}$
    \STATE $L^{(A)}, L^{(B)} \gets \mathrm{Cholesky}(A^{-1}), \mathrm{Cholesky}(B^{-1})$
    \STATE Normalize columns of $L^{(A)}, L^{(B)}$ by the diagonal entries.
    \IF{$n=m=1$}
    \RETURN $\mathrm{round}(W)$
    \ELSIF{$m < n$}
    \STATE $V_{:, 1:n/2} \gets \text{BaKron-recursive}(W_{:, 1:n/2}, L^{(A)}_{1:n/2, 1:n/2}, L^{(B)})$
    \STATE $W_{:, n/2:} \gets W_{:, n/2:} + L^{(B)} \cdot (V_{:, 1:n/2} - W_{:, 1:n/2}) \cdot (L^{(A)}_{n/2:, 1:n/2})^T$
    \STATE $V_{:, n/2:} \gets \text{BaKron-recursive}(W_{:, n/2:}, L^{(A)}_{n/2:, n/2:}, L^{(B)})$
    \RETURN $V$
    \ELSE
    \STATE $V_{1:m/2, :} \gets \text{BaKron-recursive}(W_{1:m/2, :}, L^{(A)}, L^{(B)}_{1:m/2, 1:m/2})$
    \STATE $W_{m/2:, :} \gets W_{m/2:, :} + L^{(B)}_{m/2:, 1:m/2} \cdot (V_{1:m/2, :} - W_{1:m/2, :}) \cdot (L^{(A)})^T$
    \STATE $V_{m/2:, :} \gets \text{BaKron-recursive}(W_{m/2:, :}, L^{(A)}, L^{(B)}_{m/2:, m/2:})$
    \RETURN $V$
    \ENDIF
  \end{algorithmic}
\end{algorithm}

\subsection{BaKron: Combining the Two Algorithmic Approaches}

Finally, we can combine the two techniques, yielding the final BaKron algorithm: We process anti-diagonals in parallel, and we do a recursive divide-and-conquer approach over the ``anti-diagonal dimension''.
The latter means that we recursively process the first half of anti-diagonals, then we propagate the error to the second half of anti-diagonals, and after that we recursively process the second half of anti-diagonals.
See \cref{alg:bakron}.
This unites the best of both worlds: Just $\Oh(m+n)$ sequential steps and just $\Oh(m n (m+n))$ total work.

\begin{algorithm}[H]
  \caption{BaKron (combination of BaKron-antidiagonal and BaKron-recursive)}
  \label{alg:bakron}
  \begin{algorithmic}
    \STATE {\bfseries Input:} $W \in \mathbb{R}^{m \times n}$, $A \in \mathbb{R}^{n \times n}, B \in \mathbb{R}^{m \times m}$, $k,l$ \COMMENT{$k:l$ is the range of anti-diagonals that shall be processed, initially for the main call $k=2$ and $l=m+n+1$}
    \STATE $L^{(A)}, L^{(B)} \gets \mathrm{Cholesky}(A^{-1}), \mathrm{Cholesky}(B^{-1})$
    \STATE Normalize columns of $L^{(A)}, L^{(B)}$ by the diagonal entries.
    \IF{$l = k+1$}
    \RETURN $\mathrm{round}(\mathrm{antidiag}_k(W))$
    \ELSE
    \STATE $V \gets \text{BaKron}(k, \frac{k+l}{2})$
    \STATE $W \gets W + \mathrm{antidiag}_{\frac{k+l}{2}:l} \left( L^{(B)} \cdot (V - \mathrm{antidiag}_{k:\frac{k+l}{2}} (W)) \cdot (L^{(A)})^T \right)$
    \STATE $V \gets V + \text{BaKron}(\frac{k+l}{2}, l)$
    \RETURN $V$
    \ENDIF
  \end{algorithmic}
\end{algorithm}

The following theorem asserts that all the modifications we have done are valid, i.e., they do not change the output of the algorithm. It is proved in \cref{app:equivproofs}.

\begin{theorem}
  \label{thm:equiv}
  The algorithms BaKron-naive, BaKron-antidiagonal, BaKron-recursive, and BaKron are all equivalent, i.e., they produce the same outputs.
\end{theorem}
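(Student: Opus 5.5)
We will reduce all four algorithms to one closed-form characterization of the output. Let $U^{(A)}, U^{(B)}$ be the column-normalized Cholesky factors. They are lower triangular with unit diagonal. Write $N^{(A)} = L^{(A)} - I$ and $N^{(B)} = L^{(B)} - I$, both strictly lower triangular. For an ordering of the entries, define the ``value at processing time'' of entry $(i,j)$ as
\[
\tilde W_{i,j} = W_{i,j} + \sum_{(i',j') \prec (i,j)} L^{(B)}_{i,i'}\,\Delta_{i',j'}\,L^{(A)}_{j,j'},
\qquad V_{i,j} = \mathrm{round}(\tilde W_{i,j}),\quad \Delta_{i,j} = V_{i,j}-\tilde W_{i,j}.
\]
The key observation is that $L^{(B)}_{i,i'} L^{(A)}_{j,j'} \neq 0$ only if $i' \le i$ and $j' \le j$. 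The sum therefore effectively ranges over the set $P(i,j)=\{(i',j') \ne (i,j) : i'\le i,\ j'\le j\}$ of strict top-left predecessors, independently of the ordering, provided the ordering is a linear extension of the product partial order on indices.

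The first step is to show this formula describes BaKron-naive, where the order is lexicographic. By induction over the loop, an entry $(i,j)$, once processed, is never modified afterwards. This uses the normalization: the update from $(i,j)$ sets $W_{i,j}$ to exactly $v$, because $L^{(B)}_{i,i}=L^{(A)}_{j,j}=1$, and it touches no earlier entry. Thus $V_{i,j}$ is determined by the recursion above. This recursion is a well-founded definition over the product order, so the output is a function $\Phi(W,L^{(A)},L^{(B)})$ alone. Any algorithm that rounds each entry exactly when its current value equals $\tilde W_{i,j}$, i.e.\ after receiving all updates from $P(i,j)$ and none from non-predecessors, produces $\Phi$.

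Next, BaKron-antidiagonal. All of $P(i,j)$ lies on anti-diagonals strictly less than $i+j$, and entries on the same anti-diagonal are incomparable. When anti-diagonal $k$ is rounded, each of its entries has therefore received exactly the updates from its predecessors; updates from non-predecessors vanish by triangularity. The batched matrix update equals the sum of the individual outer products, so this case is immediate.

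For the recursive algorithms I would argue by strong induction on the problem size, proving a slightly generalized statement. Given a block of entries $S$ closed downward inside itself and a current $W$, the call returns $\mathrm{round}$ of the recursion $\Phi$ restricted to $S$, where the ``initial'' values already incorporate every update from outside $S$.
\begin{itemize}
\item For BaKron-recursive, take the column split. Every predecessor of an entry in the left half lies in the left half. The propagation line adds exactly $\sum_{(i',j')\in\text{left}} L^{(B)}_{i,i'}\Delta_{i',j'}L^{(A)}_{j,j'}$ to the right half, with $\Delta$ the accumulated errors $V-\tilde W$. Here I must check that $V_{:,1:n/2}-W_{:,1:n/2}$, taken with the original unmodified left values, equals $L^{(B)}\Delta\,(\text{sub-}L^{(A)})^T$ restricted appropriately. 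This amounts to the identity $V - W_{\text{orig}} = L^{(B)}\Delta (L^{(A)})^T$ on the block, which holds because the total change to the block is the sum of all its internal updates. Once that is established, the propagated term is $L^{(B)}(L^{(B)})^{-1}\cdots$.
\item BaKron is handled by the same argument with anti-diagonal ranges instead of column ranges.
\end{itemize}

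The main obstacle is precisely this bookkeeping. The recursive calls return $V$ but the propagation uses $V - W$ rather than the raw errors $\Delta$. I need to verify that multiplying by the full $L$ factors, and not their inverses, gives the correct cross-block update; otherwise the propagation line would need $(L)^{-1}$. I expect to resolve it via the identity
\[
V_{\text{block}} - W_{\text{block}} = \left(L^{(B)}\Delta (L^{(A)})^T\right)_{\text{block}},
\]
combined with the convention that the sub-call passes normalized $L$ blocks. If the paper's recursion instead implicitly stores $\Delta$, I would adjust the invariant accordingly. In BaKron the extra $\mathrm{antidiag}$ masking needs a further check: within a range of anti-diagonals that is not a rectangle, the same identity should still hold, since masking merely restricts to the future region and past entries are already final.
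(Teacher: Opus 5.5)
Your treatment of BaKron-naive and BaKron-antidiagonal is correct and matches the paper's argument. Triangularity makes the rounding value of $(i,j)$ depend only on errors at its top-left predecessors, so every linear extension of the product order gives the same recursion, and a batched product is the sum of the rank-one updates. The gap is in the recursive algorithms, exactly at the step you flag, and the resolution you propose does not work. You read the $W$ in the propagation line as the unmodified values on the first block. You then plan to use $V - W^{\mathrm{in}} = \Pi_{S_1}\bigl(L^{(B)}\Delta(L^{(A)})^T\bigr)$, with $\Delta$ the errors committed on $S_1$. That identity is true. But for the column split into column sets $J_1,J_2$, substituting it into the propagation line gives $L^{(B)}L^{(B)}\Delta_{:,J_1}(L^{(A)}_{J_1,J_1})^T(L^{(A)}_{J_2,J_1})^T$. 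The required update is $L^{(B)}\Delta_{:,J_1}(L^{(A)}_{J_2,J_1})^T$, and nothing in the algorithm supplies the inverse your cancellation $L^{(B)}(L^{(B)})^{-1}$ needs. Under that pass-by-value reading the theorem is in fact false. Take $m=1$ and a first block of columns $1,2$: the update sent to column $3$ picks up the spurious term $\Delta_{1,1}L^{(A)}_{2,1}L^{(A)}_{3,2}$.

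The missing idea is the in-place workspace invariant, which the paper states as an explicit convention:
\begin{itemize}
\item recursive calls act on views of one common workspace;
\item the base case returns the rounded values without writing them back;
\item every propagation writes only into the second child's active set, so no update is ever written into an already processed set.
\end{itemize}
Consequently, when the first child returns, the workspace on $S_1$ holds, entry by entry, the value $\widetilde W_{i,j}$ at which that entry was rounded. So $V - W$ on $S_1$ is exactly the error matrix $\Delta$, with no $L$ factors, and the propagation line is exactly the batch of updates from $S_1$ to $S_2$.

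For BaKron you must also check what the mask $\mathrm{antidiag}_{\frac{k+l}{2}:l}$ does. Excluding $S_1$ preserves the invariant for the ancestors, which later re-read the workspace on $S_1$. Excluding anti-diagonals $\geq l$ avoids double counting, since those updates are applied by an ancestor's propagation. Each source--target pair of anti-diagonals is therefore handled exactly once, at the lowest common ancestor in the recursion tree. With this invariant, your induction (inputs already contain all updates from outside $S$) goes through as in the paper.
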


\subsection{Computational Complexity}
\label{sec:complexity}

\cref{tab:complexity} summarizes the number of sequential steps and total cost of all the described algorithms.
These results are proved in \cref{app:complexityproofs}.
We can see that BaKron recovers essentially the same complexity as GPTQ both with respect to parallelizability (linear sequential depth) and total cost (cubic), while being able to handle an arbitrary Kronecker-factored Hessian, making the algorithm informed not just by the geometry of input features but also by the geometry of output features.

\begin{table}[H]
    \centering
    \caption{Comparison of the asymptotic computational costs of the algorithms, for quantizing a weight matrix $W \in \R^{m \times n}$. Both number of sequential steps and total cost are considered. All costs are up to constants and consider only the core part of the algorithm, without Cholesky decomposition.}
    \label{tab:complexity}
    \begin{tabular}{@{}lccc@{}}
        \toprule
        Algorithm & Geometry (Hessian) & Steps & Total Cost \\
        \midrule
        GPTQ                & one-sided & $n$   & $m n^2$ \\
        BaKron-naive (essentially BoA)        & two-sided & $mn$  & $m^2 n^2$ \\
        BaKron-antidiagonal (essentially YAQA) & two-sided & $m+n$ & $m^2 n^2$ \\
        BaKron-recursive    & two-sided & $mn$  & $mn(m+n)$ \\
        \textbf{BaKron}     & two-sided & $m+n$ & $mn(m+n)$ \\
        \bottomrule
    \end{tabular}
\end{table}

We note that these costs neither include the Cholesky decomposition, nor the accumulation of the Hessian from the calibration dataset, nor the matrix multiplications needed to send the calibration dataset through the neural network.
The latter turns out to actually be the dominant cost.

Indeed, suppose we quantize a linear module $W \in \R^{m \times n}$ based on a calibration dataset containing $k$ sample inputs.\footnote{Note that, for quantizing medium-sized Transformer-based large language models, one usually has $k \approx 2^8 \cdot 2^{11}$ (256 sequences of length 2048 each), and $2^9 \lesssim m, n \lesssim 2^{14}$.} For GPTQ with Hessian $H \approx \E[x x^T] \otimes I_m$, it takes:
\begin{itemize}
    \item $\Oh(k n^2)$ operations to accumulate the Hessian
    \item $\Oh(n^3)$ operations to compute the inverse and the Cholesky decomposition of the Hessian
    \item $\Oh(m n^2)$ operations for the core algorithm ($n$ sequential steps with $\Oh(m n)$ operations each)
    \item $\Oh(k m n)$ operations for sending the calibration data through the linear module
\end{itemize}
We note that, since usually $k \gg m$ and $k \gg n$, the dominant cost for GPTQ is from accumulating the Hessian and sending the calibration data through the linear module.

In comparison, for BaKron it takes:
\begin{itemize}
    \item $\Oh(k (n^2 + m^2)) = \Oh(k \max(m,n)^2)$ operations to accumulate the Hessian $A \otimes B$
    \item $\Oh(n^3 + m^3) = \Oh(\max(m,n)^3)$ operations to compute inverses and Cholesky decompositions
    \item $\Oh(mn (m+n))$ operations for the core algorithm (split over $m+n$ sequential steps)
    \item $\Oh(k m n)$ operations for sending the calibration data through the linear module
\end{itemize}

We can see that, once again, the first and the last term dominate.
Thus, in this calibration regime, as with GPTQ, the core BaKron algorithm does not dominate the end-to-end quantization cost. Instead, the principal costs arise from accumulating the Hessian and propagating the calibration data through the module.
Note that this would \emph{not} be the case without our algorithmic improvement: The previously best implementation of the considered algorithm was YAQA (BaKron-antidiagonal), which has quartic total cost instead of cubic, thus becoming the dominant cost when the size of matrices increases.

\subsection{Guarantees}
\label{sec:guarantees}

We recall the error bound from GPTQ as proved recently in the works by \citet{zhang2025provable, birnick2025lattice, chen2025geometry}.

\begin{theorem}
  \label{thm:gptqbound}
  Suppose \cref{alg:gptqsingle} outputs $v \in \mathbb{Z}^n$ on inputs $w \in \mathbb{R}^n$ and $H \in \mathbb{R}^{n \times n}$.
  Let $L = \mathrm{RevCholesky}(H)$.
  Then
  \begin{equation*}
    \lVert L (v - w) \rVert_2^2 \leq \frac{1}{4} \sum_{i=1}^n L_{i,i}^2 \leq \frac{1}{4} \sum_{i=1}^n H_{i,i} = \frac{1}{4} \tr(H).
  \end{equation*}
\end{theorem}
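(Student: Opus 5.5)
The plan is to identify \cref{alg:gptqsingle} with Babai's nearest plane algorithm for the lattice generated by $L$, where $L = \mathrm{RevCholesky}(H)$, so that $L^T L = H$ and $L$ is lower triangular. Then we read off the error coordinate by coordinate. The first step is to relate the matrix used by the algorithm, $\mathrm{Cholesky}(H^{-1})$, to $L$. From $H = L^T L$ we get $H^{-1} = L^{-1} L^{-T}$, and $L^{-1}$ is lower triangular with positive diagonal. By uniqueness of the Cholesky factor, $\mathrm{Cholesky}(H^{-1}) = L^{-1}$. Its diagonal entries are $1/L_{i,i}$, so after column normalization the algorithm uses $\tilde L = L^{-1} \mathrm{diag}(L)$. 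This matrix is unit lower triangular.

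The second step is an invariant: the final error $e = v - w^{(0)}$ satisfies $L e = \mathrm{diag}(L)\,\delta$, where $\delta_i = \Delta$ at step $i$. To see this, note that at step $i$ the update $w \gets w + \Delta_i \tilde L_i$ does not change coordinates $1,\dots,i-1$. Since $\tilde L_{i,i}=1$, it sets coordinate $i$ exactly to $\mathrm{round}(w_i)$. Hence the final vector equals $v$, and summing the updates gives
\[
v - w^{(0)} = \sum_i \Delta_i \tilde L_i = L^{-1}\mathrm{diag}(L)\,\delta .
\]
Multiplying by $L$ yields $L(v-w^{(0)})_i = L_{i,i}\Delta_i$. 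Each $\Delta_i$ is a rounding error of a real number, so $|\Delta_i|\le 1/2$. This gives $\lVert L(v-w)\rVert_2^2 \le \frac14\sum_i L_{i,i}^2$.

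The third step is the comparison with $H_{i,i}$. We have $H_{i,i} = (L^T L)_{i,i} = \sum_k L_{k,i}^2 \ge L_{i,i}^2$, and summing over $i$ gives $\sum_i L_{i,i}^2 \le \tr(H)$.

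The main obstacle is making the invariant precise. The coordinate $w_i$ that gets rounded is the \emph{updated} value, not the original one, so one must check that the telescoped sum of updates really expresses $v - w^{(0)}$ with coefficient vector $\delta$. The triangularity of $\tilde L$ is what guarantees this: later updates never alter already-fixed coordinates.
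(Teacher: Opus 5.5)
Your proof is correct and complete. The paper gives no proof of its own here: it defers to Zhang et al., Birnick, and Chen et al. There the route is to first show that GPTQ coincides with Babai's nearest plane algorithm for the lattice with basis $L$, where $L^T L = H$. One then invokes the classical nearest-plane bound $\frac14\sum_i \lVert b_i^*\rVert_2^2$. Orthogonalizing the columns of $L$ from last to first gives Gram--Schmidt lengths $\lVert b_i^*\rVert_2 = L_{i,i}$, and the second inequality is just $\lVert b_i^*\rVert_2 \le \lVert b_i\rVert_2 = \sqrt{H_{i,i}}$.

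You bypass the equivalence theorem entirely. The observation $\mathrm{Cholesky}(H^{-1}) = L^{-1}$ makes the normalized factor $L^{-1}\mathrm{diag}(L)$, and the whole argument collapses into the exact identity $L(v-w) = \mathrm{diag}(L)\,\delta$. This is precisely the nearest-plane fact that the error's coordinates along the orthogonalized directions are the scalar rounding errors scaled by the Gram--Schmidt lengths.

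Your version buys self-containedness and an exact formula for the loss, $\sum_i L_{i,i}^2 \Delta_i^2$, rather than only a bound. The lattice route buys the conceptual link to a classical algorithm, which the paper uses to motivate BaKron in the first place.

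One small remark: the ``obstacle'' you flag is milder than you suggest. The telescoping identity $v - w = \sum_i \Delta_i \tilde L_i$ holds for any sequence of rank-one updates, whatever values were rounded. Unit lower triangularity is needed only to conclude that the final iterate is integral and hence equals the returned $v$.
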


From this we can deduce an error bound for BaKron.

\begin{corollary}
  \label{thm:bakronbound}
  Suppose \cref{alg:bakronantidiagonal} outputs $V \in \mathbb{Z}^{m \times n}$ on inputs $W \in \mathbb{R}^{m \times n}$ and $A \in \mathbb{R}^{n \times n}$, $B \in \mathbb{R}^{m \times m}$.
  Let $L^{(A)} = \mathrm{RevCholesky}(A)$ and $L^{(B)} = \mathrm{RevCholesky}(B)$.
  Then
  \begin{align*}
    \lVert L^{(B)} (V - W) (L^{(A)})^T \rVert_F^2 &\leq \frac{1}{4} \sum_{i=1}^m \sum_{j=1}^n (L^{(B)}_{i,i})^2 (L^{(A)}_{j,j})^2 \leq \frac{1}{4} \tr(A) \cdot \tr(B).
  \end{align*}
\end{corollary}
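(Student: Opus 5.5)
The plan is to reduce the corollary to \cref{thm:gptqbound} applied in the vectorized weight domain. By \cref{thm:equiv}, the output $V$ of \cref{alg:bakronantidiagonal} coincides with the output of BaKron-naive (\cref{alg:bakronnaive}). By construction, BaKron-naive is exactly \cref{alg:gptqsingle} run on $w = \vecop(W)$ with $H = A \otimes B$. The correspondence comes from \cref{prop:iso}, using that $\mathrm{Cholesky}((A\otimes B)^{-1}) = \mathrm{Cholesky}(A^{-1}) \otimes \mathrm{Cholesky}(B^{-1})$. The column normalization also factorizes, since the diagonal entry of the Kronecker product at index $(j,i)$ is $L^{(A)}_{j,j} L^{(B)}_{i,i}$. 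The column-major ordering of $\vecop$ matches the loop order, with $i$ outer over rows of $B$-space and $j$ inner, after checking the index convention. Hence $\vecop(V)$ is the GPTQ output, and \cref{thm:gptqbound} gives
\[
\lVert L (\vecop(V) - \vecop(W)) \rVert_2^2 \le \tfrac14 \sum L_{k,k}^2, \qquad L = \mathrm{RevCholesky}(A \otimes B).
\]

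The next step is to identify $L$. The matrix $L^{(A)} \otimes L^{(B)}$ is lower triangular with positive diagonal, because a Kronecker product of lower triangular matrices is lower triangular. It also satisfies $(L^{(A)}\otimes L^{(B)})^T (L^{(A)}\otimes L^{(B)}) = A \otimes B$. By uniqueness of $\mathrm{RevCholesky}$, $L = L^{(A)} \otimes L^{(B)}$. Then \cref{prop:iso} gives $L\,\vecop(V-W) = \vecop(L^{(B)}(V-W)(L^{(A)})^T)$, whose $2$-norm equals the Frobenius norm on the left-hand side of the claim. The diagonal entries of $L$ are the products $L^{(A)}_{j,j}L^{(B)}_{i,i}$, which yields the middle expression.

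The final inequality comes from factoring the double sum as $\bigl(\sum_i (L^{(B)}_{i,i})^2\bigr)\bigl(\sum_j (L^{(A)}_{j,j})^2\bigr)$. Each factor is bounded by the corresponding trace using $L_{i,i}^2 \le (L^TL)_{i,i}$, exactly as in the second inequality of \cref{thm:gptqbound}.

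The main obstacle is bookkeeping rather than substance. One must confirm that the index ordering of $\vecop$ and the Kronecker convention make BaKron-naive literally identical to GPTQ on $A\otimes B$, including the processing order. One must also confirm that the triangularity conventions (lower triangular, reverse Cholesky) line up so that $L^{(A)}\otimes L^{(B)}$ is exactly the reverse Cholesky factor, rather than a permuted version of it. Once $\vecop$ is fixed as column-major and the Hessian is written as $A \otimes B$, \cref{prop:iso} settles this.
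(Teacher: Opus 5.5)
Your route is the paper's: reduce to BaKron-naive via \cref{thm:equiv}, read that as GPTQ on $\vecop(W)$ with $H = A\otimes B$, apply \cref{thm:gptqbound} with $\mathrm{RevCholesky}(A\otimes B) = L^{(A)}\otimes L^{(B)}$, and translate back with \cref{prop:iso}. Your uniqueness argument for the reverse Cholesky factor is right, and so is the final trace bound via $L_{i,i}^2 \le (L^TL)_{i,i}$.

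One claim is false, and it sits exactly at the step you called the main obstacle. Under column-major $\vecop$, the entry $W_{i,j}$ has index $(j-1)m+i$, so the row index is the fast one. GPTQ on $\vecop(W)$ with $H=A\otimes B$ therefore processes $W$ column by column ($j$ outer, $i$ inner). \cref{alg:bakronnaive}, by contrast, loops row by row ($i$ outer, $j$ inner). \cref{prop:iso} does not reconcile this, so BaKron-naive is not literally \cref{alg:gptqsingle} on $\vecop(W)$.

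The output equality you need is still true, and there are two easy ways to obtain it. First, both traversals are linear extensions of the order $\preceq$ from the proof of \cref{thm:equiv}. That proof's induction shows that any such traversal rounds every entry at the canonical value of \eqref{eq:canonical-antidiagonal}, so the outputs agree. Second, you can run GPTQ on the row-major vectorization $\vecop(W^T)$ with $H=B\otimes A$. This performs exactly the updates of \cref{alg:bakronnaive} in exactly its order and has reverse Cholesky factor $L^{(B)}\otimes L^{(A)}$. It also satisfies $\lVert (L^{(B)}\otimes L^{(A)})\,\vecop((V-W)^T)\rVert_2 = \lVert L^{(B)}(V-W)(L^{(A)})^T\rVert_F$, so it yields the same bound. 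The paper's one-line identification tacitly relies on the first of these.
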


\begin{proof}
By \cref{thm:equiv}, it suffices to prove this for the output of \cref{alg:bakronnaive}.
But the output of \cref{alg:bakronnaive} is just the output of \cref{alg:gptqsingle} when called with $w := \vecop(W)$ and $H := A \otimes B$.
Now \cref{thm:gptqbound} guarantees that
\begin{equation*}
  \lVert (L^{(A)} \otimes L^{(B)}) \cdot \vecop(V - W) \rVert_2^2 \leq \frac{1}{4} \sum_{i=1}^{m \cdot n} (L^{(A)} \otimes L^{(B)})_{i,i}^2 \leq \frac{1}{4} \sum_{i=1}^{m \cdot n} (A \otimes B)_{i,i}.
\end{equation*}
We finish by applying \cref{prop:iso}.
\end{proof}

Note that the outer inequality $\lVert L (v - w) \rVert_2^2 \leq \tr(H)/4$, and the corresponding result for BaKron, actually hold for \emph{any} $L$ that satisfies $L^T L = H$, not just for the lower-triangular reverse Cholesky decomposition.
Indeed, that is because two such $L, L'$ are related by $L' = O L$ with $O$ having orthonormal columns, and $\lVert \ \cdot \ \rVert_2$ is invariant under multiplying a vector by $O$ on the left.

In the case of GPTQ, which is equivalent to running BaKron with the Hessian $H = X X^T \otimes I_m$, we obtain the error bound
$\lVert (V - W) X \rVert_F^2 \leq \frac{1}{4} \cdot \tr(X X^T) \cdot m$.
More generally with BaKron, 
\begin{equation*}
    \lVert B^{1/2} (V - W) A^{1/2} \rVert_F^2 \leq \frac{1}{4} \tr(A) \cdot \tr(B).
\end{equation*}
If $A$ and $B$ encode the geometries that are relevant to the layer, then
$\lVert B^{1/2} (V-W) A^{1/2} \rVert_F^2$
is a more natural proxy for the loss induced by quantization. Moreover, the bound depends on $\tr(A)\tr(B)$ rather than on $\tr(A)m$. Therefore, when $B$ has low effective rank, or more generally when $\tr(B)$ is small compared with the identity baseline under a comparable normalization, BaKron can yield a sharper numerical bound than GPTQ.

\section{Hessian Approximations}
\label{sec:hessians}

BaKron applies to any Kronecker-factored approximation $H_W = A \otimes B$ of the Hessian of the loss with respect to $W$, provided $A$ and $B$ are positive definite.
In practice they only need to be positive \emph{semi}-definite, and they will be regularized by adding a small multiple of the identity matrix.
It remains to decide:
\begin{enumerate}
\item Which loss to consider for $W$.
\item How to approximate the resulting Hessian by a Kronecker product.
\end{enumerate}

These two questions are briefly sketched in the next two subsections, and discussed in detail in \cref{app:hessians}.

For the loss we consider two possibilities:
The first is the global loss function, involving labels, on which the neural network was trained.
The second is a local loss function based on the current module, or on the current module together with the next few modules of the network.
See \cref{sec:hessianloss}.

For the Kronecker factorization, we also consider two possibilities.
The first is based on an independence assumption (``K-FAC-style'').
The second is based on a power iteration scheme (``Shampoo-style'').
See \cref{sec:hessianfactor}.

In total, these choices give four possibilities for the Kronecker-factored Hessian approximations in BaKron. We compare their empirical performance in \cref{app:experiments}.

\subsection{Choosing the Loss: Local vs. Global}
\label{sec:hessianloss}

Quantizing $W$ to a matrix $V$ incurs some distortion
to the neural network, and our goal is to choose $V$ so as to minimize this distortion.
We model the distortion by a positive definite quadratic form in the quantity $\vecop(W) - \vecop(V)$.
That is, we assume the distortion is given by
\begin{equation*}
(\vecop(W) - \vecop(V))^T H (\vecop(W) - \vecop(V)) = \lVert \vecop(W) - \vecop(V) \rVert_H^2
\end{equation*}
where $H$ is a positive definite matrix.
However, there are still multiple sensible choices for $H$.

Denote the input to the linear unit $W$ by $a$ and the output by $b$, so that $b = W a$.
If we write $\cL_{x,y}(W) = -\log p_{f_W}(y \mid x)$ for the negative log-likelihood of a sample $(x,y)$ and $g = \nabla_b \cL_{x,y}$, then
\(\nabla_{\vecop(W)} \cL_{x,y} = a \otimes g\).
The Hessian of the cross entropy loss of the network with respect to its own predictive distribution is the Fisher information matrix:
\begin{equation*}
    H_W
    =
    \E_x \E_{y \sim f_W(x)}
    \left[
        (a \otimes g)(a \otimes g)^T
    \right]
    =
    \E_x \E_{y \sim f_W(x)}
    \left[
        aa^T \otimes gg^T
    \right]
\end{equation*}
This is the most natural Hessian to consider, however it comes at the cost of backpropagation.
If done naively, this can incur high computational and memory requirements.
See \cref{sec:efficientbackprop} for an improved method of computing this Hessian.

However, even the improved method is not as efficient as using a local Hessian approximation that does not require any backward passes.
This approach is used by the BoA algorithm \citep{kim2024boa}.
They consider a local loss for the self-attention module of a Transformer and derive the Hessians:
\begin{align*}
H_{W_Q} &= X X^T \otimes K K^T &\text{where} \; K &= W_K X \\
H_{W_K} &= X X^T \otimes Q Q^T &\text{where} \; Q &= W_Q X
\end{align*}
We derive similar local Hessians for the MLP module of a Transformer:
\begin{align*}
H_{W_\mathrm{up}} &= \E[x x^T \otimes (g g^T \odot W_\mathrm{down}^T W_\mathrm{down})] &\text{where} \; g &= \sigma (W_\mathrm{gate} x) \\
H_{W_\mathrm{gate}} &= \E[x x^T \otimes (f f^T \odot W_\mathrm{down}^T W_\mathrm{down})] &\text{where} \; f &= \sigma' (W_\mathrm{gate} x) \odot W_\mathrm{up} x
\end{align*}
See \cref{app:hessianloss} for details.

\subsection{Choosing the Factorization: Independence vs. Power Iteration}
\label{sec:hessianfactor}

Once we have a Hessian of the form
\begin{equation*}
    H = \E[a a^T \otimes b b^T]
\end{equation*}
we still have to find a Kronecker-factored approximation of it.
We consider two ways of doing that.

The first (``K-FAC style'') makes an independence assumption to pull the expectation out, yielding
\begin{equation*}
    H \approx \E[a a^T] \otimes \E[b b^T].
\end{equation*}
The second (``Shampoo style'') finds the best Kronecker approximation to $H$ using power iteration.
For example, with identity initialization and a single iteration one obtains
\begin{equation*}
H \approx \E[\tr(b^T b) \cdot a a^T] \otimes \E[\tr(a^T a) \cdot b b^T].
\end{equation*}
The same technique has already been used by \citet{tseng2025model}.
See \cref{app:hessianfactor} for details.

\section{Computing Backpropagated Hessians Efficiently}
\label{sec:efficientbackprop}

In this section, we explain an algorithmic technique to compute the ``global'' Hessian from \cref{sec:hessianloss} efficiently.
It works for both the K-FAC style and the Shampoo style approximation.

Denote the number of layers of the neural network by $\ell$.
First recall that for the classic GPTQ flow, we proceed layer by layer, and only load one layer at a time into memory.
The computational cost is therefore $\Oh(\ell)$ and the memory cost is $\Oh(1)$.
This flow also works for the local Hessian variants of BaKron.

Now recall that for the global Hessian of a linear module we need both the input of the module and the gradient of the output of the module, for a whole calibration dataset.
Naively, one could proceed layer by layer, and to compute the output gradients do a full backward pass each time.
But this requires $\Oh(\ell^2)$ compute effort, making it impractical.
Another naive method is to do a single backward pass and use it to accumulate the Hessians of \emph{all} layers at the same time.
This is the approach used by \citet{tseng2025model} for YAQA.
But this requires $\Oh(\ell)$ memory, with a big constant hidden in the asymptotic notation.
Indeed, storing the Hessian approximation for an $m$ by $n$ matrix requires $m^2 + n^2$ memory, thus storing the Hessian approximations for all matrices in the model requires multiple times the model size in memory.
This is why YAQA requires a whole GPU rack even to quantize relatively small models.

We propose a recursive technique that finds a sweet spot for compute and memory requirements.
Given a range of layers, initially the full list of layers, the recursive algorithm requires the input to the first layer and the gradient of the output of the last layer.
Then a forward and backward pass is required to compute the activations as well as their gradient at exactly the middle of the layer list (``mid-layer activations'').
Then one can split the layer list into the first half and second half.
For each of those lists one has the input as well as the gradient of the output, so one can call the algorithm recursively on the two layer lists.
When the list consists of just a single layer, one has reached the base case, which consists of calling BaKron to quantize the layer.
This divide-and-conquer approach requires $\Oh(\ell \log \ell)$ compute and $\Oh(\log \ell)$ memory.
See \cref{tab:backprop}.
We discuss in \cref{sec:memorydiscussion} what this means in practice, and why the memory improvement over YAQA is of a different nature than the asymptotics suggest.

\begin{table}[H]
    \centering
    \caption{Compute and memory requirements for quantizing $\ell$ layers with different quantization flows. Only the dependence on the number of layers $\ell$ is displayed. The constants suppressed by the asymptotic notation differ substantially between the rows, since the rows store different objects; see \cref{sec:memorydiscussion}.}
    \label{tab:backprop}
    \small
    \begin{tabular}{@{}lccl@{}}
        \toprule
        Method & Compute & Memory & Stored quantity \\
        \midrule
        \multicolumn{4}{@{}l}{\textit{Local Hessians: no backward passes}} \\
        \addlinespace[2pt]
        Layer-by-layer flow & $\Oh(\ell)$ & $\Oh(1)$ & activations, $1$ layer \\
        \midrule
        \multicolumn{4}{@{}l}{\textit{Global Hessians: backward passes required}} \\
        \addlinespace[2pt]
        Naive I: backward pass per layer & $\Oh(\ell^2)$ & $\Oh(1)$ & activations \& gradients, $1$ layer \\
        Naive II: all Hessians at once (YAQA) & $\Oh(\ell)$ & $\Oh(\ell)$ & Hessians, all $\ell$ layers \\
        \textbf{Ours: recursive halving} & $\Oh(\ell \log \ell)$ & $\Oh(\log \ell)$ & activations \& gradients, $\Oh(\log \ell)$ layers \\
        \bottomrule
    \end{tabular}
\end{table}

\subsection{Memory Requirements in Practice}
\label{sec:memorydiscussion}

The $\Oh(\log \ell)$ versus $\Oh(\ell)$ comparison in \cref{tab:backprop} looks more favorable than it is, and it is worth spelling out why.
The two rows do not measure the same unit of memory: YAQA stores \emph{Hessians}, whereas our recursion stores \emph{activations and activation gradients}, and a Hessian is by far the smaller object.

To make this concrete, suppose the network consists of $\ell$ layers, each an $n \times n$ matrix, and that the calibration dataset consists of $k$ sample inputs.
YAQA holds, for every layer, the two Hessian factors $A \in \R^{n \times n}$ and $B \in \R^{n \times n}$, hence $2 \ell n^2$ numbers.
Our recursion holds, at every level of the recursion, the activations and the gradient activations at the boundaries of the current layer range, each of which is a $k \times n$ matrix.
Along a single root-to-leaf path at most $2 (\log_2 \ell + 1)$ of these matrices are alive simultaneously, hence $2 k n (\log_2 \ell + 1)$ numbers.
The point is that the Hessian factor $A = X^T X$ is a summary of the activation matrix $X \in \R^{k \times n}$ from which it was accumulated, compressed by the factor $k / n$.
Storing raw activations instead of Hessians therefore only pays off once $\ell / \log_2 \ell$ exceeds $k / n$, which in the calibration regime of \cref{sec:complexity} is a factor on the order of $10^2$.

For actual transformer architectures the comparison is more favorable to us than this idealized computation suggests, because the recursion splits the network at block boundaries, where activations are only as wide as the residual stream, whereas the Hessian factors of the linear modules inside a block are much larger due to the MLP expansion factor.
For Llama-3-8B, with residual width $d = 4096$ and $\ell = 32$ blocks, the Hessian factors of one block comprise $45.9\,d^2$ entries, while a boundary activation matrix comprises only $k d = 128\,d^2$ entries for $k = 2^{19}$ calibration tokens.
In absolute terms, YAQA holds about $99$ GB of Hessians in float32, whereas the at most $12$ simultaneously live boundary matrices amount to about $52$ GB in bfloat16.
So in terms of raw memory our flow is comparable to YAQA, and better only by a small constant factor.

The decisive difference lies elsewhere, namely in \emph{what kind} of memory the two flows require.
YAQA accumulates all $\ell$ Hessians during a single backward pass, so every microbatch of calibration data updates every one of the $\ell$ Hessians.
All of them must therefore stay in fast accelerator-local memory for the entire pass.
Offloading them to CPU memory would require either transferring the whole Hessian state across the host interconnect once per microbatch, or performing the rank updates on the CPU, which for Llama-3-8B amounts to roughly $10^{16}$ floating point operations and thus hours of CPU time.
Neither is practical, which is why the Hessians must instead be distributed across many accelerators.

Our recursion has no such requirement.
Each boundary activation and gradient activation matrix is written once and then read exactly once by each of the two recursive calls, in a single sequential sweep over the calibration dataset, which is processed microbatch by microbatch.
These matrices can therefore be kept in CPU memory and streamed to the accelerator, overlapped with computation.
The only accelerator-resident state is the weights of the layer currently being traversed, one microbatch of activations, and, in the base case, the Hessian of the single layer being quantized.
Moreover, the streaming is comfortably bandwidth-bound: for Llama-3-8B the entire run moves on the order of $1$ TB across the host interconnect while performing on the order of $10^{17}$ floating point operations of forward and backward work, which at $25$ GB/s and $250$ TFLOP/s is well under a minute of transfer against several minutes of computation, so the transfers can be hidden behind the computation.

This, rather than the asymptotic memory bound, is the practical improvement: our flow reduces the \emph{accelerator-resident} memory needed for the global Hessians from several times the model size to essentially the size of a single layer, at the price of a $\Oh(\log \ell)$ factor of additional compute.
We finally note that the two flows are the extreme points of a family of flows: one may stop the recursion at a leaf consisting of $s$ layers and accumulate the Hessians of all $s$ layers there in a single pass, which trades a compute factor of $\log_2 (\ell / s)$ against $2 s n^2$ additional resident memory.
YAQA is the case $s = \ell$, and $s$ can be chosen to fill the memory of the available accelerator.

\section{Benchmarks}
\label{sec:benchmarks}

In addition to the theoretical complexity results we empirically benchmark the algorithms on matrices of different sizes to demonstrate the practical speedup. See \cref{tab:benchmarks}.

\begin{table}[H]
    \centering
    \caption{Runtime in seconds of the core quantization algorithm on a single weight matrix $W \in \R^{m \times n}$, excluding Hessian accumulation and Cholesky factorization. Measured on a single NVIDIA RTX PRO 6000 in float32, with CUDA Graphs and custom Triton kernels for all four algorithms. BaKron additionally uses custom memory layouts: weights, errors, and workspace are stored in anti-diagonal-major order, and the two inverse factors in diagonal-major order. The last column is the speedup of BaKron over BaKron-antidiagonal (equivalently YAQA), the previously best implementation of the algorithm.}
    \label{tab:benchmarks}
    \small
    \begin{tabular}{@{}lrrrrr@{}}
        \toprule
        Matrix size & GPTQ & \textbf{BaKron} & BaKron-antidiag. & BaKron-naive & Speedup \\
        \midrule
        $1024 \times 1024$   & 0.005 & 0.025 & 0.092   & 56.052  & $3.7\times$  \\
        $2048 \times 2048$   & 0.012 & 0.074 & 0.599   & 225.179 & $8.1\times$  \\
        $4096 \times 4096$   & 0.025 & 0.288 & 7.648   & $> 300$ & $26.6\times$ \\
        $8192 \times 8192$   & 0.059 & 1.839 & 110.379 & $> 300$ & $60.0\times$ \\
        \addlinespace[3pt]
        $8192 \times 2048$   & 0.013 & 0.335 & 6.445   & $> 300$ & $19.3\times$ \\
        $2048 \times 8192$   & 0.046 & 0.322 & 6.037   & $> 300$ & $18.7\times$ \\
        $14336 \times 4096$  & 0.030 & 1.600 & 72.886  & $> 300$ & $45.6\times$ \\
        $4096 \times 14336$  & 0.088 & 1.717 & 73.437  & $> 300$ & $42.8\times$ \\
        \bottomrule
    \end{tabular}
\end{table}

The speedup of BaKron over BaKron-antidiagonal grows with the matrix size, as the quartic versus cubic total cost predicts, reaching $60\times$ on the largest square shape.
BaKron-naive is impractical beyond the smallest shapes: it costs about $50$ microseconds per weight, so on Llama-3-8B its $mn$ sequential steps alone would take more than four days.

\newpage
\bibliography{references}

@article{frantar2022gptq,
  title={{GPTQ}: Accurate post-training quantization for generative pre-trained transformers},
  author={Frantar, Elias and Ashkboos, Saleh and Hoefler, Torsten and Alistarh, Dan},
  journal={arXiv preprint arXiv:2210.17323},
  year={2022}
}

@article{birnick2025lattice,
  title={The Lattice Geometry of Neural Network Quantization -- A Short Equivalence Proof of {GPTQ} and {B}abai's algorithm},
  author={Birnick, Johann},
  journal={arXiv preprint arXiv:2508.01077},
  year={2025}
}

@article{chen2025geometry,
  title={The Geometry of {LLM} Quantization: {GPTQ} as {B}abai's Nearest Plane Algorithm},
  author={Chen, Jiale and Shabanzadeh, Yalda and Crn{\v{c}}evi{\'c}, Elvir and Hoefler, Torsten and Alistarh, Dan},
  journal={arXiv preprint arXiv:2507.18553},
  year={2025}
}

@article{zhang2025provable,
  title={Provable post-training quantization: Theoretical analysis of {OPTQ} and {Q}ronos},
  author={Zhang, Haoyu and Zhang, Shihao and Colbert, Ian and Saab, Rayan},
  journal={arXiv preprint arXiv:2508.04853},
  year={2025}
}

@article{chee2023quip,
  title={{QuIP}: 2-bit quantization of large language models with guarantees},
  author={Chee, Jerry and Cai, Yaohui and Kuleshov, Volodymyr and De Sa, Christopher M},
  journal={Advances in Neural Information Processing Systems},
  volume={36},
  pages={4396--4429},
  year={2023}
}

@article{ashkboos2024quarot,
  title={{QuaRot}: Outlier-free 4-bit inference in rotated {LLM}s},
  author={Ashkboos, Saleh and Mohtashami, Amirkeivan and Croci, Maximilian L and Li, Bo and Cameron, Pashmina and Jaggi, Martin and Alistarh, Dan and Hoefler, Torsten and Hensman, James},
  journal={Advances in Neural Information Processing Systems},
  volume={37},
  pages={100213--100240},
  year={2024}
}

@article{liu2024spinquant,
  title={{SpinQuant}: {LLM} quantization with learned rotations},
  author={Liu, Zechun and Zhao, Changsheng and Fedorov, Igor and Soran, Bilge and Choudhary, Dhruv and Krishnamoorthi, Raghuraman and Chandra, Vikas and Tian, Yuandong and Blankevoort, Tijmen},
  journal={arXiv preprint arXiv:2405.16406},
  year={2024}
}

@article{kim2024boa,
  title={{BoA}: Attention-aware Post-training Quantization without Backpropagation},
  author={Kim, Junhan and Kim, Ho-young and Cho, Eulrang and Lee, Chungman and Kim, Joonyoung and Jeon, Yongkweon},
  journal={arXiv preprint arXiv:2406.13474},
  year={2024}
}

@article{tseng2025model,
  title={Model-Preserving Adaptive Rounding},
  author={Tseng, Albert and Sun, Zhaofeng and De Sa, Christopher},
  journal={arXiv preprint arXiv:2505.22988},
  year={2025}
}

@article{lamaakal2025bayesq,
  title={{BayesQ}: Uncertainty-Guided {B}ayesian Quantization},
  author={Lamaakal, Ismail and Yahyati, Chaymae and Maleh, Yassine and Makkaoui, Khalid El and Ouahbi, Ibrahim},
  journal={arXiv preprint arXiv:2511.08821},
  year={2025}
}

@inproceedings{hassibi1993optimal,
  title={Optimal brain surgeon and general network pruning},
  author={Hassibi, Babak and Stork, David G and Wolff, Gregory J},
  booktitle={IEEE international conference on neural networks},
  pages={293--299},
  year={1993},
  organization={IEEE}
}

@article{morwani2024new,
  title={A New Perspective on {S}hampoo's Preconditioner},
  author={Morwani, Depen and Shapira, Itai and Vyas, Nikhil and Malach, Eran and Kakade, Sham and Janson, Lucas},
  journal={arXiv preprint arXiv:2406.17748},
  year={2024}
}

@incollection{van1993approximation,
  title={Approximation with {K}ronecker products},
  author={Van Loan, Charles F and Pitsianis, Nikos},
  booktitle={Linear algebra for large scale and real-time applications},
  pages={293--314},
  year={1993},
  publisher={Springer}
}

@inproceedings{martens2015optimizing,
  title={Optimizing neural networks with {K}ronecker-factored approximate curvature},
  author={Martens, James and Grosse, Roger},
  booktitle={International conference on machine learning},
  pages={2408--2417},
  year={2015},
  organization={PMLR}
}

@article{martens2020new,
  title={New insights and perspectives on the natural gradient method},
  author={Martens, James},
  journal={Journal of Machine Learning Research},
  volume={21},
  number={146},
  pages={1--76},
  year={2020}
}

@article{pile,
  title={The {P}ile: An 800{GB} Dataset of Diverse Text for Language Modeling},
  author={Gao, Leo and Biderman, Stella and Black, Sid and Golding, Laurence and Hoppe, Travis and Foster, Charles and Phang, Jason and He, Horace and Thite, Anish and Nabeshima, Noa and Presser, Shawn and Leahy, Connor},
  journal={arXiv preprint arXiv:2101.00027},
  year={2020}
}

@misc{eval-harness,
  author       = {Gao, Leo and Tow, Jonathan and Abbasi, Baber and Biderman, Stella and Black, Sid and DiPofi, Anthony and Foster, Charles and Golding, Laurence and Hsu, Jeffrey and Le Noac'h, Alain and Li, Haonan and McDonell, Kyle and Muennighoff, Niklas and Ociepa, Chris and Phang, Jason and Reynolds, Laria and Schoelkopf, Hailey and Skowron, Aviya and Sutawika, Lintang and Tang, Eric and Thite, Anish and Wang, Ben and Wang, Kevin and Zou, Andy},
  title        = {The Language Model Evaluation Harness},
  month        = 07,
  year         = 2024,
  publisher    = {Zenodo},
  version      = {v0.4.3},
  doi          = {10.5281/zenodo.12608602},
  url          = {https://zenodo.org/records/12608602}
}

@article{grattafiori2024llama,
  title={The {L}lama 3 herd of models},
  author={Grattafiori, Aaron and Dubey, Abhimanyu and Jauhri, Abhinav and Pandey, Abhinav and Kadian, Abhishek and Al-Dahle, Ahmad and Letman, Aiesha and Mathur, Akhil and Schelten, Alan and Vaughan, Alex and others},
  journal={arXiv preprint arXiv:2407.21783},
  year={2024}
}

@article{yang2025qwen3,
  title={Qwen3 technical report},
  author={Yang, An and Li, Anfeng and Yang, Baosong and Zhang, Beichen and Hui, Binyuan and Zheng, Bo and Yu, Bowen and Gao, Chang and Huang, Chengen and Lv, Chenxu and others},
  journal={arXiv preprint arXiv:2505.09388},
  year={2025}
}

@misc{chen2026gptq2dcubictimetwosidedadaptive,
      title={{GPTQ-2D}: Cubic-Time Two-Sided Adaptive Rounding}, 
      author={Jiale Chen and Torsten Hoefler and Dan Alistarh},
      year={2026},
      eprint={2607.27042},
      archivePrefix={arXiv},
      primaryClass={cs.DS},
      url={https://arxiv.org/abs/2607.27042}, 
}
\bibliographystyle{unsrtnat}

\newpage
\appendix
\crefalias{section}{appendix}

\section{Hessian Approximations in Detail}
\label{app:hessians}

In this section we describe the Hessian constructions from \cref{sec:hessians} in detail.

\subsection{Choosing the Loss: Local vs. Global}
\label{app:hessianloss}

Recall from \cref{sec:hessianloss} that we measure the distortion induced by quantization as
\begin{equation*}
(\vecop(W) - \vecop(V))^T H (\vecop(W) - \vecop(V)) = \lVert \vecop(W) - \vecop(V) \rVert_H^2
\end{equation*}
where $H$ is a positive definite matrix.

The most desirable way to measure the distortion is to use the loss function $\cL(W)$ which was also used to train the network.
Since the network is given to us in trained state, it is natural to assume the network parameters are at (or near) a local minimizer of $\cL$, so that $\nabla_{\vecop(W)} \cL \approx 0$ and $\nabla^2_{\vecop(W)} \cL$ is positive (semi-)definite.
Keeping only the quadratic term in the Taylor expansion of $\cL$ around $W$ then gives the minimization problem above with $H = \nabla^2_{\vecop(W)} \cL$.

Now suppose the output $f_W (x)$ of the neural network represents a probability distribution for the label $y$.
Since the quantized model should approximate the original model, a natural choice is the cross entropy loss with respect to the model's own predictive distribution. That is, the expectation below is taken over the calibration input data and over labels sampled from the predictive distribution of the original model:
\begin{equation*}
    \cL (V) = \E_x \E_{y \sim f_W (x)} [-\log p_{f_V} (y|x)]
\end{equation*}
In this case, it is well known \citep{martens2020new} that the Hessian at $V=W$ is equal to an expectation over gradient outer products, namely it is equal to the Fisher information matrix (FIM):
\begin{equation*}
    \nabla^2_{\vecop(W)} \cL = \E_x \E_{y \sim f_W (x)} [\nabla_{\vecop(W)} \log p_{f_W} (y|x) \nabla_{\vecop(W)} \log p_{f_W} (y|x)^T ]
\end{equation*}

Since $W$ is a matrix, this expression has additional structure.
Denote the input to the linear unit $W$ by $a$ and the output by $b$, so that $b = W a$.
If we write $\cL_{x,y}(W) = -\log p_{f_W}(y \mid x)$ and $g = \nabla_b \cL_{x,y}$, then
\(\nabla_{\vecop(W)} \cL_{x,y} = a \otimes g\).
Thus the Fisher information matrix associated with $W$ is:
\begin{equation*}
    H_W
    =
    \E_x \E_{y \sim f_W(x)}
    \left[
        (a \otimes g)(a \otimes g)^T
    \right]
    =
    \E_x \E_{y \sim f_W(x)}
    \left[
        aa^T \otimes gg^T
    \right]
\end{equation*}

One could instead take the label $y$ from the calibration dataset, rather than sampling it
from the model's predictive distribution. In this case, the corresponding
matrix is known as the \emph{empirical} Fisher information matrix.
For BaKron, this choice would make the quantization more strongly adapted (``overfitted'') to the calibration dataset, which may or may not be desired.

This global Hessian is what we would want to use ideally to inform the quantization algorithm.
However, it comes at a cost of backpropagation.
While our technique from \cref{sec:efficientbackprop} makes this practically possible, it still requires some level of backpropagation, for example an initial backward pass of the calibration dataset.
In a strongly memory-constrained setting, for example for really large models or a GPU with little memory, one cannot afford backpropagation at all.

We therefore consider a second option: a local Hessian approximation, which does
not require a full backward pass through the network. This approach is used by
the BoA algorithm \citep{kim2024boa}. BoA considers the self-attention module in
Transformer-based architectures, where the attention matrix is computed as
$X^T W_Q^T W_K X$. If one uses a Frobenius loss for the attention matrix alone,
then the resulting local Hessians are:
\begin{align*}
H_{W_Q} &= X X^T \otimes K K^T &\text{where} \; K &= W_K X \\
H_{W_K} &= X X^T \otimes Q Q^T &\text{where} \; Q &= W_Q X
\end{align*}
The BoA algorithm forms these local Hessians separately for each attention head,
which allows the attention heads to be processed in parallel.
This parallelism is key to making their method practical.
Indeed, BoA applies GPTQ-style updates in the vectorized weight domain, but does not use any of the algorithmic performance improvements that we discussed in this paper.
BaKron therefore provides a direct acceleration of BoA.

Further, the approach of \citet{kim2024boa} does not consider the MLP module in the Transformer architecture.
Matrices in this module do not naturally split into a collection of smaller matrices like the attention heads in the attention module.
Thus, applying GPTQ-style updates in the vectorized weight domain would require $m \cdot n$ sequential steps in the MLP module, making the approach impractical for large weight matrices.
BaKron makes this computation feasible, so we derive local Hessians for the MLP module.
Concretely, we focus on the case of a Gated Linear Unit (GLU) followed by another linear unit, as employed by many modern Transformer-based architectures. For simplicity of exposition we ignore biases.
Fix an activation function $\sigma$. Then the MLP is given by:
\begin{equation*}
    \mathrm{MLP}(x) = W_\mathrm{down} ( W_\mathrm{up} x \odot \sigma(W_\mathrm{gate} x))
\end{equation*}
If we use $D_s$ to denote a diagonal matrix with the vector $s$ on the diagonal, this can be written as:
\begin{equation*}
    \mathrm{MLP}(x) = W_\mathrm{down} D_{\sigma(W_\mathrm{gate} x)} W_\mathrm{up} x
\end{equation*}
So if we use an $\ell_2$-loss for the output of the MLP module alone, the corresponding local Hessians are:
\begin{align*}
H_{W_\mathrm{up}} &= \E[x x^T \otimes (g g^T \odot W_\mathrm{down}^T W_\mathrm{down})] &\text{where} \; g &= \sigma (W_\mathrm{gate} x) \\
H_{W_\mathrm{gate}} &= \E[x x^T \otimes (f f^T \odot W_\mathrm{down}^T W_\mathrm{down})] &\text{where} \; f &= \sigma' (W_\mathrm{gate} x) \odot W_\mathrm{up} x
\end{align*}

Note that the Schur product theorem guarantees that the Hadamard product of two positive (semi-)definite matrices is itself positive (semi-)definite.

\subsection{Choosing the Factorization: Independence vs. Power Iteration}
\label{app:hessianfactor}

In the previous section we  considered both a global and a local approach for computing Hessians for the linear unit $W \in \R^{m \times n}$.
In both cases the resulting Hessian has the form
\begin{equation*}
    H = \E[a a^T \otimes b b^T]
\end{equation*}
where $a, b$ may be vectors or matrices, and the expectation is taken over the calibration data.

In both the global and local settings, $a$ is the input to the linear unit. In the global setting, $b$ is the gradient of the output of the linear unit with respect to the global loss, and in the local MLP setting $b = D_g W_\mathrm{down}^T $ or $b = D_f W_\mathrm{down}^T $.

However,  this Hessian is an \emph{expectation} over Kronecker factored matrices, and therefore is itself \emph{not} a Kronecker factored matrix, in general.
We therefore seek a Kronecker product approximation of the form $H \approx A \otimes B$.

One simple approach to obtain such an approximation is to assume that the Kronecker factors are independent, so that the Kronecker product can be pulled out of the expectation. This gives
\begin{equation*}
    H \approx \E[a a^T] \otimes \E[b b^T].
\end{equation*}
We refer to this as a \emph{K-FAC-style} approximation, since the same approximation is used in K-FAC-based optimization \citep{martens2015optimizing}.

We also propose another approach based on a power iteration scheme. We refer to this as a \emph{Shampoo-style} approximation following the interpretation of Shampoo given by \citet{morwani2024new}.
The same technique has already been employed by \citet{tseng2025model} for YAQA, deriving essentially the same Hessian as our ``global Shampoo-style'' Hessian.
We explain the technique in detail below.

Suppose we want to find the \emph{best} Kronecker-factored approximation to the matrix $H$, in the sense of minimizing $\lVert H - A \otimes B \rVert_F$.
This problem was studied by \citet{van1993approximation}.
They define a rearrangement operator $\cR$ which maps an $mp \times nq$ matrix to an $mn \times pq$ matrix.
This operator $\cR$ is linear, invertible, and satisfies:
\begin{equation*}
    \lVert \cR(M) \rVert_F = \lVert M \rVert_F \qquad \langle \cR(M), \cR(N) \rangle = \langle M, N \rangle \qquad \cR(M \otimes N) = \vecop(M) \vecop(N)^T
\end{equation*}
Since $\cR$ preserves the Frobenius norm,  minimizing $\|H-A\otimes B \|_F$ is equivalent to minimizing $\lVert \cR(H) - \vecop(A) \vecop(B)^T\rVert_F$.
Thus, the best Kronecker approximation of $H$ is obtained from the best rank-1 approximation of $\cR(H)$.
In our setting,
\begin{equation*}
    \cR(H) = \E[\cR(a a^T \otimes b b^T)] = \E[\vecop(a a^T) \vecop(b b^T)^T]
\end{equation*}
The best rank-1 approximation to a matrix is given by the outer product of the top left and top right singular vectors, scaled by the top singular value.
Since the algorithms in this paper are invariant under multiplying the Hessian by a scalar, we only need the top left and top right singular vectors of $\cR(H)$ up to scaling. 
These singular vectors can be found using the power iteration.
Given a matrix $M$, and initial vectors $u^{(0)}$ and $v^{(0)}$, the iterations (without normalization) take the form 
\begin{equation*}
    u^{(i+1)} = M v^{(i)} \qquad v^{(i+1)} = M^T u^{(i)}
\end{equation*}
Substituting $M = \cR(H)$  and identifying $u = \vecop(A)$ and $v = \vecop(B)$ gives 
\begin{equation*}
    \vecop(A^{(i+1)}) = \E[\vecop(a a^T) \vecop(b b^T)^T \vecop(B^{(i)})] = \E[\vecop(a a^T) \tr(b b^T B^{(i)}) ]
\end{equation*}
Applying $\vecop^{-1}$ and using the cyclic property of the trace yields
\begin{equation*}
    A^{(i+1)} = \E[\tr(b^T B^{(i)} b) \cdot a a^T] \qquad B^{(i+1)} = \E[\tr(a^T A^{(i)} a) \cdot b b^T].
\end{equation*}
Compared to the K-FAC-style approximation above, this approach weighs the outer products $a a^T$  by  $\tr(b^T B^{(i)} b)$.
If we perform only a \emph{single} iteration initialized with $A^{(0)} = I_n$, $B^{(0)} = I_m$, we obtain
\begin{equation*}
A \approx \E[\tr(b^T b) \cdot a a^T] \qquad B \approx \E[\tr(a^T a) \cdot b b^T].
\end{equation*}
One can also do multiple  power iterations, although each iteration requires a separate pass over the calibration data.
Finally, if $a$ and $b$ are both vectors, as in our global Hessian case, the arguments of the trace operators above become scalars, and one can drop the ``$\tr$'' from the expressions.
For the local MLP Hessians, we have $b = D_g W_\mathrm{down}^T$ or $b = D_f W_\mathrm{down}^T$, and we find (in the former case):
\begin{equation*}
    \tr(b^T b) = \tr(W_\mathrm{down} D_g^2 W_\mathrm{down}^T) = \tr(D_g^2 W_\mathrm{down}^T W_\mathrm{down}) = g^T \mathrm{diag}(W_\mathrm{down}^T W_\mathrm{down}) g
\end{equation*}

\section{Equivalence Proofs}
\label{app:equivproofs}

Throughout this section we work in exact arithmetic and fix a rule for breaking rounding ties.
Recall that after the normalization step, $L^{(A)}$ and $L^{(B)}$ are lower triangular with all diagonal entries equal to one.

\textbf{Conventions for the recursive algorithms.}
By the convention of \cref{sec:notation}, the two children of a midpoint split in \cref{alg:bakronrecursive,alg:bakron} operate on disjoint ranges which together cover the range of the parent.
The Cholesky factorizations and the column normalization are performed only in the top-level call; a recursive call receives the corresponding submatrices of the already normalized factors.
The recursive calls operate in place on views of a common workspace $W$.
Such a call returns the quantized values $V$ on its active set, while the workspace on that set retains the values that were present immediately before those entries were rounded.
Finally, $\mathrm{round}(\mathrm{antidiag}_k(W))$ means that rounding is applied only to the entries on the $k$\textsuperscript{th} anti-diagonal, and that the returned matrix is zero elsewhere.

\begin{proof}[Proof of \cref{thm:equiv}]
Let $S$ be a set of matrix indices, and let $E$ be a matrix which is supported on $S$, with entries $e_{i,j}$.
Since a sum of outer products can be written as a single matrix product, we have
\begin{equation}
\label{eq:batched-feedback}
\begin{aligned}
    L^{(B)} E (L^{(A)})^T &= \sum_{(i,j) \in S} L^{(B)}_i \, e_{i,j} \, (L^{(A)}_j)^T , \\
    \bigl( L^{(B)} E (L^{(A)})^T \bigr)_{p,q} &= \sum_{(i,j) \in S} L^{(B)}_{p,i} \, e_{i,j} \, L^{(A)}_{q,j} .
\end{aligned}
\end{equation}
In other words, one matrix multiplication performs exactly the elementary updates that BaKron-naive would perform for all entries of $S$, in a single batch.

Since $L^{(A)}$ and $L^{(B)}$ are lower triangular, the coefficient $L^{(B)}_{p,i} L^{(A)}_{q,j}$ vanishes unless $i \leq p$ and $j \leq q$.
Thus, writing $(i,j) \preceq (p,q)$ if $i \leq p$ and $j \leq q$, the error committed at $(i,j)$ can only affect entries $(p,q)$ with $(i,j) \preceq (p,q)$, as already observed in \cref{sec:algorithm}.
Moreover, any such entry other than $(i,j)$ itself satisfies $i+j < p+q$, and hence lies on a strictly later anti-diagonal.
In particular, two distinct entries on the same anti-diagonal never affect one another.

We now describe the intended output in a way that does not refer to any traversal order.
Let $W^{(0)}$ be the input matrix, let $\mathcal I_k = \{(i,j) : i+j=k\}$ denote the $k$\textsuperscript{th} anti-diagonal, and for an index set $S$ let $\Pi_S$ denote the operator which keeps the entries on $S$ and sets all other entries to zero, so that $\Pi_{\mathcal I_k} = \mathrm{antidiag}_k$.
Define matrices $\widetilde W_k, V_k, E_k$ for $k=2,...,m+n$ by
\begin{equation}
\label{eq:canonical-antidiagonal}
    \widetilde W_k
    = \Pi_{\mathcal I_k} \Bigl( W^{(0)} + L^{(B)} \bigl( \textstyle\sum_{l<k} E_l \bigr) (L^{(A)})^T \Bigr) ,
    \qquad
    V_k = \mathrm{round}(\widetilde W_k) ,
    \qquad
    E_k = V_k - \widetilde W_k .
\end{equation}
The right hand side of the first equation only involves $E_l$ with $l < k$, so this is a valid recursive definition, and it determines the matrix $V := \sum_k V_k$ uniquely.
Here $\widetilde W_k$ collects the values at which the entries of the $k$\textsuperscript{th} anti-diagonal are supposed to be rounded, and $E_k$ collects the errors committed there.
We show that all four algorithms output this $V$.

BaKron-naive traverses the entries in row-major order, which respects the partial order $\preceq$: if $(i,j) \preceq (p,q)$ and $(i,j) \neq (p,q)$, then $(i,j)$ is processed first.
Hence, at the time the algorithm reaches $(p,q)$, all errors which can affect this entry have already been committed, and all errors committed so far which cannot affect it enter \eqref{eq:batched-feedback} with coefficient zero.
By induction along the traversal, the value at which $(p,q)$ is rounded is therefore the $(p,q)$-entry of $\widetilde W_{p+q}$, so the algorithm computes \eqref{eq:canonical-antidiagonal} entry by entry.

BaKron-antidiagonal performs the same computation one anti-diagonal at a time.
By \eqref{eq:batched-feedback}, its matrix multiplication in step $k$ applies precisely the elementary updates caused by the entries on the $k$\textsuperscript{th} anti-diagonal.
Since the diagonal entries of $L^{(A)}$ and $L^{(B)}$ are equal to one, this update in particular replaces the active entries $\widetilde W_k$ by $\widetilde W_k + E_k = V_k$, and no later update can change them again.
So the algorithm outputs $\sum_k V_k = V$.

It remains to show that the two recursive algorithms only delay and batch these same updates.
We prove by induction over the recursion tree that a call with active set $S$ rounds its entries at the values prescribed by \eqref{eq:canonical-antidiagonal}, provided that all updates caused by entries outside of $S$ have been applied to the workspace before the call.
Here the active set is a block of consecutive rows and columns for BaKron-recursive, and a range of consecutive anti-diagonals for BaKron.
In the base case a single entry, respectively a single anti-diagonal, is rounded, which is \eqref{eq:canonical-antidiagonal} by assumption on the workspace.
For the inductive step, let $S_1$ and $S_2$ be the active sets of the two children of the call.
No error committed in $S_2$ can affect an entry of $S_1$, because BaKron-recursive splits into consecutive ranges of rows or columns, and BaKron splits into earlier and later anti-diagonals.
Therefore the first child can be completed before the second one is started.
When the first child returns, the workspace on $S_1$ still holds the values at which its entries were rounded, since neither algorithm ever writes an update into an already processed set.
Hence the matrix $V - W$ formed at this point is exactly the matrix $E$ of errors committed on $S_1$.
By \eqref{eq:batched-feedback}, the multiplication between the two recursive calls thus applies $\Pi_{S_2} (L^{(B)} E (L^{(A)})^T)$, which is precisely the batch of all updates from $S_1$ to $S_2$.
Updates within $S_1$ or within $S_2$ are handled deeper in the recursion, so the hypothesis of the second child is satisfied.
Hence both recursive algorithms output $V$ as well.
\end{proof}

\section{Complexity Proofs}
\label{app:complexityproofs}

In this section, we prove the bounds on sequential steps and total cost
reported in \cref{tab:complexity}.

\textbf{Complexity model.}
We analyze only the core quantization algorithm, and exclude the Cholesky factorizations and the column normalizations, which are performed once.
The ``Total Cost'' column measures \emph{work}, i.e., the number of scalar arithmetic operations.
The ``Steps'' column counts the number of sequential invocations of parallel primitives, where we regard both an elementwise map and a matrix multiplication as a single such invocation.
Thus ``sequential steps'' refers to coarse-grained synchronization rounds, rather than to span at the level of individual scalar operations.
This reflects how the algorithms are executed in practice, where a matrix multiplication is a single call to a GPU kernel.
As throughout the paper, all costs are up to constant factors, and all midpoint splits are disjoint and balanced, as in the convention of \cref{app:equivproofs}.

If one insists on measuring span in the standard bounded-fan-in work--span model, where a matrix product with inner dimension $p$ has span $\Oh(\log (1+p))$ because of the dot product reductions, then the picture does not change for the recursive algorithms.
Indeed, if the recursion tree has $N$ leaves, then at height $h$ above the leaves there are $\Oh(N/2^h)$ nodes, each of span $\Oh(h)$, so that summing over the whole tree gives
\begin{equation*}
    \sum_{h\geq 1}\Oh\left(\frac{Nh}{2^h}\right)=\Oh(N).
\end{equation*}
Hence BaKron-recursive and BaKron have scalar span $\Oh(mn)$ and $\Oh(m+n)$ respectively, matching their entries in \cref{tab:complexity}.
Only BaKron-antidiagonal picks up a logarithmic factor, namely its scalar span is $\Oh((m+n) \log (1+\min(m,n)))$.

\textbf{GPTQ.}
At iteration $j$, the $m$ rows are independent and can be processed in parallel.
Quantizing the current column and applying its errors to the remaining columns costs $\Oh(m(n-j+1))$, so the total work is
\begin{equation*}
    \sum_{j=1}^{n} \Oh(m(n-j+1)) = \Oh(mn^2).
\end{equation*}
The $n$ columns are processed one after another, giving $n$ steps.

\textbf{BaKron-naive.}
By lower triangularity, the update caused by entry $(i,j)$ is supported on $\{i,...,m\}\times\{j,...,n\}$.
This rank-one update therefore costs $\Oh((m-i+1)(n-j+1))$, and summing over all entries gives
\begin{equation*}
    \sum_{i=1}^{m}\sum_{j=1}^{n}(m-i+1)(n-j+1)
    = \frac{m(m+1)n(n+1)}{4}
    = \Oh(m^2n^2).
\end{equation*}
The two nested loops round one entry at a time, hence the algorithm uses $mn$ steps.

\textbf{BaKron-antidiagonal.}
Set $r=\min(m,n)$.
In each step, $\Delta$ is supported on a single anti-diagonal, so it has at most $r$ nonzero entries, no two of which share a row or a column.
If that anti-diagonal contains $d$ entries, at positions $(i_1,j_1),...,(i_d,j_d)$ and with errors $\delta = (\delta_1,...,\delta_d)$, then the update can be evaluated as
\begin{equation*}
    \bigl[L^{(B)}_{i_1}\ \cdots\ L^{(B)}_{i_d}\bigr]
    \; D_\delta \;
    \bigl[L^{(A)}_{j_1}\ \cdots\ L^{(A)}_{j_d}\bigr]^T.
\end{equation*}
After scaling the selected columns of $L^{(B)}$ by the $\delta_i$, this is a product of an $m\times d$ matrix and a $d\times n$ matrix, which costs $\Oh(mnd)$.
A full anti-diagonal has $d=r$, in which case the cost is
\begin{equation*}
    \Oh(mnr)=\Oh\bigl(r^2(m+n)\bigr),
\end{equation*}
where the equality holds because $mn=r\max(m,n)$ and $m+n=\Oh(\max(m,n))$.
A constant fraction of the $m+n-1$ anti-diagonals have length proportional to $r$, so the total work is
\begin{equation*}
    \Oh\bigl(r^2(m+n)^2\bigr)=\Oh(m^2n^2).
\end{equation*}
Since the entries on one anti-diagonal do not depend on one another, each anti-diagonal is handled by one matrix product, giving $m+n-1$ steps.

\textbf{BaKron-recursive.}
Let $T(m,n)$ denote the total work of the algorithm.
Suppose $m<n$, so that the columns are split as $n=n_1+n_2$ with $n_1 = \lfloor n/2 \rfloor$.
The update between the two recursive calls multiplies an $m\times m$ matrix by an $m\times n_1$ matrix, and the result by an $n_1\times n_2$ matrix.
Its cost is
\begin{equation*}
    \Oh(m^2n_1+mn_1n_2)=\Oh(mn_1n_2),
\end{equation*}
where the equality uses $m< n \leq 2 n_2$.
Now the function $F(m,n)=mn(m+n)$ satisfies
\begin{equation*}
    F(m,n)-F(m,n_1)-F(m,n_2)
    =m\bigl(n^2-n_1^2-n_2^2\bigr)
    =2mn_1n_2 ,
\end{equation*}
so induction on $mn$, with a sufficiently large constant in the induction hypothesis, gives $T(m,n)=\Oh(F(m,n))$.
When $m\geq n$, the algorithm splits the rows as $m=m_1+m_2$ instead, the update costs $\Oh(nm_1m_2)$, and the same argument applies with
\begin{equation*}
    F(m,n)-F(m_1,n)-F(m_2,n)=2nm_1m_2.
\end{equation*}
Thus the total work is $\Oh(mn(m+n))$.
The recursion tree has one leaf for each of the $mn$ entries and $mn-1$ internal nodes, and the two children of a node are executed one after another, so the algorithm uses $\Oh(mn)$ steps.

\textbf{BaKron.}
Set $r=m+n-1$, the number of anti-diagonals.
Consider a recursive call whose active set is a range $I$ of $q$ anti-diagonals, and let $N(I)$ be the number of matrix entries on them.
The update between its two recursive calls is evaluated as two restricted band products: first $M = L^{(B)}E$ on the band $I$, and then $M(L^{(A)})^T$ on the band of the second child.
Each requested entry of $M$ is a sum of at most $q$ terms, because $E$ is supported on anti-diagonals in $I$.
The same is true for the second product: an output entry $(p,q')$ only receives contributions from the entries $M_{p,j}$ with $j \leq q'$, and $M_{p,j}$ vanishes unless $p+j \geq \min I$, so again at most $q$ indices $j$ contribute.
The work at this node is therefore
\begin{equation*}
    \Oh(qN(I)).
\end{equation*}
We charge this work uniformly to the $N(I)$ entries of the band, at $\Oh(q)$ each.
A fixed matrix entry lies in one band per level of the recursion, and the widths of these bands along the path from the root to its leaf are at most $r,\lceil r/2\rceil,\lceil r/4\rceil,...$, which sum to $\Oh(r)$.
Each of the $mn$ entries is therefore charged $\Oh(r)$ work in total, so all updates together cost
\begin{equation*}
    \Oh(mnr)=\Oh(mn(m+n)).
\end{equation*}
Rounding at the leaves adds another $\Oh(mn)$.
Finally, the recursion tree has $r$ leaves, one per anti-diagonal, and $r-1$ internal nodes, and each node performs a constant number of ordered batched operations, so BaKron uses $\Oh(r)=\Oh(m+n)$ steps.

\section{Experiments}
\label{app:experiments}

For experimental evaluation, we have quantized models up to 8 billion parameters in size from the Llama-3 and Qwen3 families \citep{grattafiori2024llama, yang2025qwen3}.

For calibration, we used 256 sequences of 2048 tokens each, from ``The Pile'' dataset \citep{pile}.
We use a \emph{symmetric} quantization alphabet of the form $\{-\ell, ..., +\ell\}$.
All results reported here use $\ell = 3$, that is $\log_2 7 \approx 2.81$ bits per weight.
Scaling factors are chosen per output feature (i.e. per row of $W$, no groups).
Concretely, we first choose the maximal absolute value per row (divided by $\ell$), and then optionally shrink it by up to a factor of 2 based on an MSE search.
Each Gram matrix is regularized by adding $0.5$ times its mean diagonal value to the diagonal, which is more regularization than other papers typically use.
The features are processed in their natural order, that is, we apply no activation reordering.
For the backpropagated Hessians we sample the labels from the model's own predictive distribution, so the resulting Gram matrices are based on a Monte Carlo estimate of the model Fisher information matrix rather than on the empirical Fisher.
For evaluation, we use LM-Eval \citep{eval-harness}.
We report perplexity on Wikitext2, as well as accuracy on two zero-shot tasks (PIQA, Winogrande).

Please note that for Wikitext2 perplexity evaluation we also use LM-Eval, which uses a special normalization.
Many other quantization works use a self-implemented perplexity evaluation, which tokenizes the Wikitext2 dataset differently and does not apply normalization, therefore leading to different perplexity results.

We also note that we found that quantization results can be significantly influenced by the choice of scaling factors.
However, although choosing the scaling factors in different ways can lead to different evaluations, the relative performance of the algorithms is usually preserved.

We have used a single NVIDIA RTX PRO 6000 GPU to conduct the experiments.
In the tables below, ``Time'' is the wall-clock time of the whole quantization pipeline, and ``Core share'' is the fraction of it spent inside the quantization algorithm itself, as opposed to accumulating the Hessians, sending the calibration data through the modules, and computing the Cholesky factorizations.

We evaluate three BaKron variants, which differ in the Hessian they use per module; \cref{tab:hessians_per_module} gives the details.
BaKron-MlpLocal quantizes only two out of seven modules differently from GPTQ, using the MLP-local Hessians we derive in \cref{app:hessianloss}.
It thereby isolates our contribution from that of BoA \citep{kim2024boa}.
BaKron-FullyLocal additionally uses attention-local Hessians in the style of BoA for the query, key and value projections.
Unlike BoA, which quantizes one attention head at a time, we assemble the per-head output-side Grams into a single block-diagonal matrix spanning the whole projection, since BaKron places no block-diagonal restriction on the output factor.
BaKron-Backprop uses the backpropagated global Hessian of \cref{sec:hessianloss} for every module, computed with the technique of \cref{sec:efficientbackprop}.
Each of the three variants is run with both the K-FAC-style and the Shampoo-style Kronecker factorization of \cref{sec:hessianfactor}.

\begin{table}[H]
    \centering
    \caption{Hessians used per module and algorithm. We denote $g = \sigma (W_\mathrm{gate} x)$, $f = \sigma' (W_\mathrm{gate} x) \odot W_\mathrm{up} x$, $h = \nabla_{y} \cL_{x,y}$ where $y = W x$, $D = W_\mathrm{down}$, and $O = W_\mathrm{o\_proj}$. For the attention-local Hessians, $Q$ and $K$ denote the query and key activations after normalization and rotary embedding, and the output factors are block-diagonal over the attention heads. For the case of BaKron, the final Kronecker-factored approximation is made either K-FAC-style or Shampoo-style, see \cref{sec:hessianfactor}.}
    \label{tab:hessians_per_module}
    \small
    \setlength{\tabcolsep}{4pt}
    \begin{tabular}{@{}lcccc@{}}
        \toprule
        Module & GPTQ & BaKron-MlpLocal & BaKron-FullyLocal & BaKron-Backprop \\
        \midrule
        \texttt{self\_attn.q\_proj} & $\E[x x^T] \otimes I_m$ & $\E[x x^T] \otimes I_m$ & $\E[x x^T] \otimes K K^T$ & $\E[x x^T \otimes hh^T]$ \\
        \texttt{self\_attn.k\_proj} & $\E[x x^T] \otimes I_m$ & $\E[x x^T] \otimes I_m$ & $\E[x x^T] \otimes Q Q^T$ & $\E[x x^T \otimes hh^T]$ \\
        \texttt{self\_attn.v\_proj} & $\E[x x^T] \otimes I_m$ & $\E[x x^T] \otimes I_m$ & $\E[x x^T] \otimes O^T O$ & $\E[x x^T \otimes hh^T]$ \\
        \texttt{self\_attn.o\_proj} & $\E[x x^T] \otimes I_m$ & $\E[x x^T] \otimes I_m$ & $\E[x x^T] \otimes I_m$ &  $\E[x x^T \otimes hh^T]$\\
        \texttt{mlp.up\_proj}       & $\E[x x^T] \otimes I_m$ & \multicolumn{2}{c}{$\E[x x^T \otimes (g g^T \odot D^T D)]$} & $\E[x x^T \otimes hh^T]$ \\
        \texttt{mlp.gate\_proj}     & $\E[x x^T] \otimes I_m$ & \multicolumn{2}{c}{$\E[x x^T \otimes (f f^T \odot D^T D)]$} & $\E[x x^T \otimes hh^T]$ \\
        \texttt{mlp.down\_proj}     & $\E[x x^T] \otimes I_m$ & $\E[x x^T] \otimes I_m$ & $\E[x x^T] \otimes I_m$ & $\E[x x^T \otimes hh^T]$ \\
        \bottomrule
    \end{tabular}
\end{table}

\begin{table}[H]
    \centering
    \caption{Experimental results for the Llama-3.2-1B model, quantized at 2.81 bits per weight.}
    \label{tab:results_llama1b}
    \small
    \setlength{\tabcolsep}{4pt}
    \begin{tabular}{@{}lrccrr@{}}
        \toprule
        Algorithm & Wikitext2 (PPL) $\downarrow$ & PIQA $\uparrow$ & Winogrande $\uparrow$ & Time (s) & Core share \\
        \midrule
        Base (unquantized)         & 11.98 & 0.745 \stderr{0.010} & 0.619 \stderr{0.014} & --- & --- \\
        \midrule
        RTN                        & 595.82 & 0.537 \stderr{0.012} & 0.523 \stderr{0.014} & $< 1$ & 1.8\% \\
        GPTQ                       & 25.63 & 0.652 \stderr{0.011} & 0.541 \stderr{0.014} & 64 & 2.8\% \\
        BaKron-MlpLocal-KFAC       & 22.32 & 0.675 \stderr{0.011} & 0.556 \stderr{0.014} & 132 & 9.0\% \\
        BaKron-MlpLocal-Shampoo    & 22.32 & 0.664 \stderr{0.011} & 0.577 \stderr{0.014} & 129 & 9.2\% \\
        BaKron-FullyLocal-KFAC     & 21.54 & 0.683 \stderr{0.011} & 0.564 \stderr{0.014} & 135 & 9.9\% \\
        BaKron-FullyLocal-Shampoo  & 26.59 & 0.677 \stderr{0.011} & 0.558 \stderr{0.014} & 136 & 9.9\% \\
        BaKron-Backprop-KFAC       & 22.04 & 0.671 \stderr{0.011} & 0.576 \stderr{0.014} & 275 & 6.8\% \\
        BaKron-Backprop-Shampoo    & 22.29 & 0.705 \stderr{0.011} & 0.554 \stderr{0.014} & 282 & 6.7\% \\
        \bottomrule
    \end{tabular}
\end{table}

\begin{table}[H]
    \centering
    \caption{Experimental results for the Llama-3.2-3B model, quantized at 2.81 bits per weight.}
    \label{tab:results_llama3b}
    \small
    \setlength{\tabcolsep}{4pt}
    \begin{tabular}{@{}lrccrr@{}}
        \toprule
        Algorithm & Wikitext2 (PPL) $\downarrow$ & PIQA $\uparrow$ & Winogrande $\uparrow$ & Time (s) & Core share \\
        \midrule
        Base (unquantized)         & 9.53 & 0.781 \stderr{0.010} & 0.695 \stderr{0.013} & --- & --- \\
        \midrule
        RTN                        & 44.96 & 0.609 \stderr{0.011} & 0.541 \stderr{0.014} & $< 1$ & 1.7\% \\
        GPTQ                       & 13.70 & 0.733 \stderr{0.010} & 0.639 \stderr{0.014} & 168 & 2.4\% \\
        BaKron-MlpLocal-KFAC       & 13.60 & 0.738 \stderr{0.010} & 0.667 \stderr{0.013} & 307 & 9.8\% \\
        BaKron-MlpLocal-Shampoo    & 13.55 & 0.749 \stderr{0.010} & 0.668 \stderr{0.013} & 313 & 9.8\% \\
        BaKron-FullyLocal-KFAC     & 13.41 & 0.748 \stderr{0.010} & 0.682 \stderr{0.013} & 324 & 11.2\% \\
        BaKron-FullyLocal-Shampoo  & 13.28 & 0.743 \stderr{0.010} & 0.671 \stderr{0.013} & 334 & 11.0\% \\
        BaKron-Backprop-KFAC       & 14.55 & 0.745 \stderr{0.010} & 0.657 \stderr{0.013} & 660 & 7.8\% \\
        BaKron-Backprop-Shampoo    & 15.47 & 0.749 \stderr{0.010} & 0.656 \stderr{0.013} & 677 & 7.7\% \\
        \bottomrule
    \end{tabular}
\end{table}

\begin{table}[H]
    \centering
    \caption{Experimental results for the Meta-Llama-3-8B model, quantized at 2.81 bits per weight.}
    \label{tab:results_llama8b}
    \small
    \setlength{\tabcolsep}{4pt}
    \begin{tabular}{@{}lrccrr@{}}
        \toprule
        Algorithm & Wikitext2 (PPL) $\downarrow$ & PIQA $\uparrow$ & Winogrande $\uparrow$ & Time (s) & Core share \\
        \midrule
        Base (unquantized)         & 7.44 & 0.807 \stderr{0.009} & 0.736 \stderr{0.012} & --- & --- \\
        \midrule
        RTN                        & 132.26 & 0.591 \stderr{0.011} & 0.548 \stderr{0.014} & 2 & 1.9\% \\
        GPTQ                       & 53.47 & 0.726 \stderr{0.010} & 0.676 \stderr{0.013} & 530 & 1.4\% \\
        BaKron-MlpLocal-KFAC       & 19.20 & 0.755 \stderr{0.010} & 0.697 \stderr{0.013} & 993 & 11.0\% \\
        BaKron-MlpLocal-Shampoo    & 42.03 & 0.728 \stderr{0.010} & 0.705 \stderr{0.013} & 900 & 12.1\% \\
        BaKron-FullyLocal-KFAC     & 17.19 & 0.733 \stderr{0.010} & 0.648 \stderr{0.013} & 1011 & 11.9\% \\
        BaKron-FullyLocal-Shampoo  & 45.31 & 0.684 \stderr{0.011} & 0.669 \stderr{0.013} & 1014 & 12.0\% \\
        BaKron-Backprop-KFAC       & 11.90 & 0.764 \stderr{0.010} & 0.695 \stderr{0.013} & 1939 & 9.2\% \\
        BaKron-Backprop-Shampoo    & 12.24 & 0.771 \stderr{0.010} & 0.692 \stderr{0.013} & 1735 & 10.4\% \\
        \bottomrule
    \end{tabular}
\end{table}

\begin{table}[H]
    \centering
    \caption{Experimental results for the Qwen3-1.7B-Base model, quantized at 2.81 bits per weight.}
    \label{tab:results_qwen17b}
    \small
    \setlength{\tabcolsep}{4pt}
    \begin{tabular}{@{}lrccrr@{}}
        \toprule
        Algorithm & Wikitext2 (PPL) $\downarrow$ & PIQA $\uparrow$ & Winogrande $\uparrow$ & Time (s) & Core share \\
        \midrule
        Base (unquantized)         & 12.42 & 0.757 \stderr{0.010} & 0.640 \stderr{0.013} & --- & --- \\
        \midrule
        RTN                        & 19493.13 & 0.573 \stderr{0.012} & 0.484 \stderr{0.014} & $< 1$ & 1.9\% \\
        GPTQ                       & 35.97 & 0.638 \stderr{0.011} & 0.521 \stderr{0.014} & 100 & 2.8\% \\
        BaKron-MlpLocal-KFAC       & 36.75 & 0.650 \stderr{0.011} & 0.543 \stderr{0.014} & 176 & 8.2\% \\
        BaKron-MlpLocal-Shampoo    & 37.59 & 0.639 \stderr{0.011} & 0.533 \stderr{0.014} & 177 & 7.8\% \\
        BaKron-FullyLocal-KFAC     & 39.16 & 0.630 \stderr{0.011} & 0.519 \stderr{0.014} & 185 & 9.7\% \\
        BaKron-FullyLocal-Shampoo  & 38.09 & 0.630 \stderr{0.011} & 0.514 \stderr{0.014} & 190 & 9.4\% \\
        BaKron-Backprop-KFAC       & 60.97 & 0.608 \stderr{0.011} & 0.513 \stderr{0.014} & 424 & 5.6\% \\
        BaKron-Backprop-Shampoo    & 20.26 & 0.702 \stderr{0.011} & 0.594 \stderr{0.014} & 432 & 5.7\% \\
        \bottomrule
    \end{tabular}
\end{table}

\begin{table}[H]
    \centering
    \caption{Experimental results for the Qwen3-4B-Base model, quantized at 2.81 bits per weight.}
    \label{tab:results_qwen4b}
    \small
    \setlength{\tabcolsep}{4pt}
    \begin{tabular}{@{}lrccrr@{}}
        \toprule
        Algorithm & Wikitext2 (PPL) $\downarrow$ & PIQA $\uparrow$ & Winogrande $\uparrow$ & Time (s) & Core share \\
        \midrule
        Base (unquantized)         & 10.37 & 0.779 \stderr{0.010} & 0.706 \stderr{0.013} & --- & --- \\
        \midrule
        RTN                        & 185.18 & 0.601 \stderr{0.011} & 0.534 \stderr{0.014} & 1 & 1.7\% \\
        GPTQ                       & 14.72 & 0.741 \stderr{0.010} & 0.624 \stderr{0.014} & 231 & 2.2\% \\
        BaKron-MlpLocal-KFAC       & 14.42 & 0.740 \stderr{0.010} & 0.633 \stderr{0.014} & 443 & 9.3\% \\
        BaKron-MlpLocal-Shampoo    & 14.52 & 0.739 \stderr{0.010} & 0.646 \stderr{0.013} & 451 & 9.1\% \\
        BaKron-FullyLocal-KFAC     & 15.05 & 0.733 \stderr{0.010} & 0.618 \stderr{0.014} & 470 & 10.3\% \\
        BaKron-FullyLocal-Shampoo  & 15.17 & 0.738 \stderr{0.010} & 0.635 \stderr{0.014} & 480 & 10.3\% \\
        BaKron-Backprop-KFAC       & 15.49 & 0.739 \stderr{0.010} & 0.631 \stderr{0.014} & 941 & 7.5\% \\
        BaKron-Backprop-Shampoo    & 13.29 & 0.755 \stderr{0.010} & 0.667 \stderr{0.013} & 957 & 7.3\% \\
        \bottomrule
    \end{tabular}
\end{table}

\begin{table}[H]
    \centering
    \caption{Experimental results for the Qwen3-8B-Base model, quantized at 2.81 bits per weight.}
    \label{tab:results_qwen8b}
    \small
    \setlength{\tabcolsep}{4pt}
    \begin{tabular}{@{}lrccrr@{}}
        \toprule
        Algorithm & Wikitext2 (PPL) $\downarrow$ & PIQA $\uparrow$ & Winogrande $\uparrow$ & Time (s) & Core share \\
        \midrule
        Base (unquantized)         & 11.12 & 0.795 \stderr{0.009} & 0.721 \stderr{0.013} & --- & --- \\
        \midrule
        RTN                        & 509.29 & 0.593 \stderr{0.011} & 0.526 \stderr{0.014} & 2 & 1.8\% \\
        GPTQ                       & 14.58 & 0.758 \stderr{0.010} & 0.663 \stderr{0.013} & 550 & 1.4\% \\
        BaKron-MlpLocal-KFAC       & 21.62 & 0.760 \stderr{0.010} & 0.674 \stderr{0.013} & 946 & 10.0\% \\
        BaKron-MlpLocal-Shampoo    & 13.91 & 0.754 \stderr{0.010} & 0.669 \stderr{0.013} & 952 & 10.1\% \\
        BaKron-FullyLocal-KFAC     & 14.48 & 0.751 \stderr{0.010} & 0.666 \stderr{0.013} & 894 & 12.2\% \\
        BaKron-FullyLocal-Shampoo  & 15.22 & 0.749 \stderr{0.010} & 0.680 \stderr{0.013} & 913 & 12.1\% \\
        BaKron-Backprop-KFAC       & 17.10 & 0.761 \stderr{0.010} & 0.660 \stderr{0.013} & 1817 & 8.9\% \\
        BaKron-Backprop-Shampoo    & 11.94 & 0.781 \stderr{0.010} & 0.711 \stderr{0.013} & 2031 & 8.0\% \\
        \bottomrule
    \end{tabular}
\end{table}

The ``Core share'' column shows that the core quantization algorithm accounts for at most $12.2\%$ of the total pipeline time.
This confirms the analysis of \cref{sec:complexity}: even with a two-sided Kronecker-factored Hessian, the pipeline remains dominated by accumulating the Hessians and by sending the calibration data through the model.
It is only the algorithmic improvements of this paper that make this so.
Running BaKron-antidiagonal (equivalently YAQA, see \cref{tab:complexity}) on Llama-3.2-1B, the core takes $18$ times as long as with BaKron and thereby accounts for more than $50\%$ of the whole pipeline.
The gap grows with the model size, since the total work of BaKron-antidiagonal grows quartically rather than cubically, see \cref{sec:benchmarks}.
BaKron-naive would have been infeasible altogether.

Regarding quantization quality, BaKron-Backprop-Shampoo usually attains a lower Wikitext2 perplexity than GPTQ, although not on every model.
The local variants stay closer to GPTQ.

\end{document}